\documentclass{article}

\usepackage{microtype}
\usepackage{graphicx}
\usepackage{subcaption}
\usepackage{booktabs} 

\usepackage{hyperref}

\usepackage[preprint]{icml2026}

\usepackage{amsmath}
\usepackage{amssymb}
\usepackage{mathtools}
\usepackage{amsthm}
\usepackage{subcaption}
\usepackage{multirow}

\usepackage[capitalize,noabbrev]{cleveref}

\theoremstyle{plain}
\newtheorem{theorem}{Theorem}[section]

\newtheorem{lemma}[theorem]{Lemma}
\newtheorem{corollary}[theorem]{Corollary}
\theoremstyle{definition}

\newtheorem{assumption}[theorem]{Assumption}
\theoremstyle{remark}
\newtheorem{remark}[theorem]{Remark}

\DeclareMathOperator*{\argmax}{arg\,max}

\DeclareMathOperator*{\col}{Col}
\DeclareMathOperator*{\row}{Row}
\DeclareMathOperator*{\rank}{rank}
\newcommand{\indep}{\perp\!\!\!\!\perp}

\usepackage[textsize=tiny]{todonotes}

\icmltitlerunning{Causal Matrix Completion under Multiple Treatments via Mixed Synthetic Nearest Neighbors}

\begin{document}

\twocolumn[
  \icmltitle{Causal Matrix Completion under Multiple Treatments \\ via Mixed Synthetic Nearest Neighbors}



  \icmlsetsymbol{equal}{*}

  \begin{icmlauthorlist}
    \icmlauthor{Minrui Luo}{equal,thu}
    \icmlauthor{Zhiheng Zhang}{equal,shufe}
  \end{icmlauthorlist}

  \icmlaffiliation{thu}{Institute for Interdisciplinary Information Sciences, Tsinghua University, Beijing, China}
  \icmlaffiliation{shufe}{School of Statistics and Data Science, Shanghai University of Finance and Economics, Shanghai, China}

  \icmlcorrespondingauthor{Zhiheng Zhang}{zhangzhiheng@mail.shufe.edu.cn}

  \icmlkeywords{Matrix Completion, Causal Inference, Multiple Treatment, Data Combination}

  \vskip 0.3in
]



\printAffiliationsAndNotice{}  

\begin{abstract}


Synthetic Nearest Neighbors (SNN) provides a principled solution to causal matrix completion under missing-not-at-random (MNAR) by exploiting local low-rank structure through fully observed anchor submatrices. However, its effectiveness critically relies on sufficient data availability within each treatment level, a condition that often fails in settings with multiple or complex treatments. 
In this work, we propose Mixed Synthetic Nearest Neighbors (MSNN), a new entry-wise causal identification estimator that integrates information across treatment levels. We show that MSNN retains the finite-sample error bounds and asymptotic normality guarantees of SNN, while enlarging the effective sample size available for estimation. Empirical results on synthetic and real-world datasets illustrate the efficacy of the proposed approach, especially under data-scarce treatment levels. 

\end{abstract}

\section{Introduction}
\label{sec:introduction}

Causal inference from observational data has become a cornerstone of modern data science, enabling the rigorous evaluation of interventions across diverse domains such as economics, public policy, and digital platforms. A critical challenge in this field is the estimation of counterfactual outcomes—i.e., what would have transpired had a different treatment been applied—when faced with incomplete data and complex treatment structures. Recent advancements in matrix completion, a well-established tool from machine learning, have been adapted to this causal inference framework, providing a principled methodology for imputing missing potential outcomes under low-rank structural assumptions. This fusion of causal inference and matrix completion is particularly pertinent in contemporary applications, where treatment regimes are multi-faceted (e.g., varying exposure levels in online advertising or policy intensities) and data are \emph{missing not at random} (MNAR), as the very process of treatment assignment is often driven by unobserved latent factors.

Within this broader context, we focus on the problem of \emph{causal matrix completion under multiple, discrete treatment levels}. This extends the binary treatment framework introduced by \citet{agarwal2023causal} to accommodate a more realistic and complex setting. The problem shifts from the task of completing a single matrix of potential outcomes to addressing a collection of matrices (or equivalently, a three-dimensional tensor), one for each treatment level, where only one outcome per unit is observed. While previous work, such as the \emph{Synthetic Interventions} framework \citep{agarwal2020synthetic}, also considers multi-level treatments, it is primarily designed for panel data with a temporal dimension, aiming to estimate average effects over post-treatment periods. In contrast, our approach abstractly separates the temporal aspect, focusing on \emph{entry-wise} counterfactual estimation in a general matrix completion setting under MNAR. This distinction is crucial: our goal is the fine-grained identification of causal effects at the unit level, rather than aggregate temporal effects. The theoretical complexity of this multi-treatment setting, particularly under MNAR conditions, remains largely unexplored, presenting formidable identification and estimation challenges.

A significant challenge arises when treatment assignments are highly imbalanced—some treatment levels are \emph{data-scarce}. Existing methods, such as the Synthetic Nearest Neighbors (SNN) algorithm, encounter difficulties in this regime. The SNN approach requires the construction of "anchor" rows and columns, exclusively using data from the same treatment level. However, when data for a given treatment level is sparse, constructing sufficiently large and valid anchor sets becomes increasingly improbable, leading to estimation failure. This issue is not merely a data limitation; it reflects a fundamental \emph{inefficiency} in the utilization of available information. The key insight bridging this gap is that, under the assumption of shared latent row factors across treatments, the imputation coefficients required for counterfactual estimation can be \emph{identified} by leveraging data from \emph{multiple} treatment levels. Previous multi-treatment frameworks, including Synthetic Interventions, either fail to exploit this cross-treatment identifiability for entry-wise estimation or are not designed to tackle the general MNAR matrix completion problem that we address.

To overcome these challenges, we propose the \textbf{M}ixed \textbf{S}ynthetic \textbf{N}earest \textbf{N}eighbors (\textbf{MSNN}) algorithm. Our method systematically tackles the data scarcity problem by relaxing the stringent same-treatment requirement inherent in the SNN approach. The key innovation lies in the introduction of \emph{Mixed Anchor Rows (MAR)} and \emph{Mixed Anchor Columns (MAC)}, enabling the imputation coefficient $\beta$ to be estimated from a block of data that spans \emph{multiple treatment levels}, while preserving the target row’s data ($x(d)$) from the treatment level of interest. This is made possible by the shared latent factor assumption, which ensures the identifiability of $\beta$ across treatments. The core technical challenge is managing the heterogeneous scales and variances introduced by mixing treatments, which we address through the careful design of appropriate weights.

The most counterintuitive and theoretically impactful result is the \emph{exponential improvement in sample efficiency} for sparse treatment levels. We demonstrate that, under a Missing Completely at Random (MCAR) treatment assignment, the expected number of usable data subgroups for MSNN, $\mathbb{E}[K_{\text{MSNN}}]$, surpasses that of SNN, $\mathbb{E}[K_{\text{SNN}}]$, by a factor of $\left[\sum_{d'} (p_{d'}/p_d)^{r+1}\right]^c$, where $p_d$ is the observation probability for treatment $d$, and $r, c$ are the sizes of the anchor sets. This exponential improvement significantly enhances the feasibility of causal estimation in data-scarce environments. Importantly, MSNN retains the finite-sample error bounds and asymptotic normality of the original SNN estimator, ensuring that this efficiency gain does not come at the cost of statistical rigor. This challenges the conventional wisdom that estimating effects for a rare treatment necessarily requires more data from that specific treatment. Instead, we show that information from more prevalent treatments can be effectively leveraged to learn about rare treatments via shared latent structures. Our contributions are summarized as follows:

\begin{enumerate}
\item We formalize the problem of entry-wise causal matrix completion under multiple MNAR treatment levels and establish a novel identification result, demonstrating that imputation coefficients can be shared across treatments under a shared latent row factor assumption.
\item We propose the MSNN algorithm, which integrates data across treatment levels via Mixed Anchor Sets. We prove that MSNN retains the desirable statistical properties (finite-sample bound, asymptotic normality) of SNN, while achieving exponential improvements in sample efficiency for sparse treatments.
\item Through simulations and a case study on California's tobacco control policy, we demonstrate that MSNN reliably estimates effects for data-scarce treatments where SNN fails, highlighting its practical applicability.
\end{enumerate}

\section{Preliminaries}

This section formalizes the problem, introducing necessary notations and assumptions.

We define the exposure levels: $\mathcal{L} = \{1,2,\cdots,l\}$ and treatment assignment $\boldsymbol{D}$: for consumer $i \in [m]$ and product $j\in [n]$, the assigned exposure level is $D_{ij} \in \{0\}\cup\mathcal{L}$, where $D_{ij}=0$ denotes no exposure. Potential outcomes are defined as $\boldsymbol{Y}$: for consumer $i$ and product $j$, under exposure level $d$ ($d \in \mathcal{L}$), the platform obtains a benefit $Y_{ij}^{(d)} \in \mathbb{R}$. The expected potential outcome is $A_{ij}^{(d)}$, which represents a latent signal matrix assumed to be low-rank across users and items. Specifically, suppose the rank is $r$, then $\boldsymbol{A}^{(d)} = \boldsymbol{U}^{(d)}{\boldsymbol{V}^{(d)}}^\top$, where $\boldsymbol{U}^{(d)} \in \mathbb{R}^{m\times r}$, $\boldsymbol{V}^{(d)} \in \mathbb{R}^{n\times r}$. The zero-mean noise is defined as $\epsilon_{ij}^{(d)} = Y_{ij}^{(d)} - A_{ij}^{(d)}$. We consider general MNAR regime: $\boldsymbol{D} \indep \boldsymbol{Y} \mid \boldsymbol{A}$, meaning that the assignment mechanism depends on the latent attributes but not on the random noise. 
Only the observed outcome $\tilde{\boldsymbol{Y}}$ under the assigned treatment $\boldsymbol{D}$ is observed: 
$
        \tilde{Y}_{ij}
        = 
        \begin{cases}
            Y_{ij}^{(D_{ij})} &,\,D_{ij} \ge 1 \\
            * &,\,D_{ij} = 0 \\ 
        \end{cases} .
$
    Here $*$ means this entry is non-observable (missing), where $\tilde{Y}_{ij} = 0$ under $D_{ij} = 0$ reflects that non-exposed items contribute no benefit. 

    
    
    
    

\textbf{Problem Statement. }Given noisy observation $\tilde{\boldsymbol{Y}}$ with the treatment assignment $\mathbf{D}$, under causal model $\boldsymbol{D} \indep \boldsymbol{Y} \mid \boldsymbol{A}$, we are tasked to find an estimator of the latent expected potential outcome $\boldsymbol{A}$ from $\tilde{\boldsymbol{Y}}$, and provide an entry-wise upper bound of error of estimation and ground truth: $\left\| \hat{\boldsymbol{A}} - \boldsymbol{A} \right\|_{\infty}$. 


\textbf{Assumptions.} Below we introduces basic assumptions for our low-rank multiple matrix completion model. 

\begin{assumption}\label{assumption: low-rank factorization} (Low-rank factorization per layer). For each $(i,j,d) \in \mathbb{R}^{m} \times \mathbb{R}^{n} \times \mathcal{L}$, $
  Y_{i j}^{(d)} = \left\langle u_i^{(d)}, v_j^{(d)} \right\rangle + \epsilon_{i j}^{(d)},
$\end{assumption}

where $u_i^{(d)}, v_j^{(d)} \in \mathbb{R}^{r}$ are latent factors specific to treatment level $d$. This is equal to the expression of $\boldsymbol{Y}^{(d)} = \boldsymbol{U}^{(d)} \boldsymbol{V}^{(d)\top} + \boldsymbol{E}^{(d)}$, where $u_i^{(d)}, v_j^{(d)}$ are the $i^{th}$ and $j^{th}$ rows of $\boldsymbol{U}^{(d)} \in \mathbb{R}^{m \times r}$, $\boldsymbol{V}^{(d)} \in \mathbb{R}^{n \times r}$ respectively.

Below we denote $\boldsymbol{U},\boldsymbol{V}$ and $\boldsymbol{E}$ as the collection of $\boldsymbol{U}^{(d)}, \boldsymbol{V}^{(d)}, \boldsymbol{E}^{(d)}$ over treatment $d$. 

\begin{assumption}\label{assumption: selection on latent factors} (Selection on latent factors). For any treatment $\boldsymbol{D}$, $\mathbb{E}\left[ \boldsymbol{E} \middle | \boldsymbol{U} ,\boldsymbol{V},\boldsymbol{D} \right] = 0 . $


\end{assumption}

\begin{remark}

By taking conditional expectation over $\boldsymbol{D}$, $\mathbb{E}\left[ \boldsymbol{E} \middle | \boldsymbol{U} ,\boldsymbol{V} \right] = \mathbb{E}_{\boldsymbol{D}}\left[\mathbb{E}\left[ \boldsymbol{E} \middle | \boldsymbol{U} ,\boldsymbol{V},\boldsymbol{D} \right] \middle| \boldsymbol{U} ,\boldsymbol{V} \right] = 0$, further implying $\mathbb{E} \left[ \boldsymbol{Y} \middle | \boldsymbol{U}, \boldsymbol{V}, \boldsymbol{D}\right] = \mathbb{E} \left[ \boldsymbol{Y} \middle | \boldsymbol{U}, \boldsymbol{V}\right]$, which is a conditional mean independence. 

\end{remark}

\begin{assumption}\label{assumption: linear span inclusion on latent row factors} (Linear span inclusion on latent row factors). There exists a universal constant $\mu \in \mathbb{N}$, such that given arbitrary $(i,d) \in\mathbb{R}^{m} \times \mathcal{L}$, for all $\mathcal{I}^{(d)}(i) \subseteq [m] \backslash \{i\}$ satisfying $\left|\mathcal{I}^{(d)}(i)\right| \ge \mu$, $u_{i}^{(d)}$ is a linear combination of $u_{l \in \mathcal{I}^{(d)}(i)}^{(d)}$. To rephrase, there exists a $\beta^{(d)}\left(\mathcal{I}^{(d)}(i)\right) \in \mathbb{R}^{\mathcal{I}^{(d)}(i)}$ such that $  u_i^{(d)} = \sum_{l \in \mathcal{I}^{(d)}(i)} \beta_l^{(d)}\left(\mathcal{I}^{(d)}(i)\right) u_{l}^{(d)} . $
  
\end{assumption}


  


The above three assumptions simply extend the basic assumptions of the binary causal matrix completion framework \citep{agarwal2023causal} into multiple treatment setting without considering the connection among different treatment levels. We characterize this low-rank structure across different treatments in the following assumption: 

\begin{assumption}\label{assumption: same latent row factors} (Shared latent row factors). All the treatment levels $d\in \mathcal{L}$ share the same latent row factors, $u_i^{(d)} \equiv u_i$.

\end{assumption}

\textbf{Justification for Assumption \ref{assumption: same latent row factors}}. This assumption is necessary for enabling cross-treatment data integration in our MSNN. It means that while treatment assignment may affect observed outcomes, the underlying latent characteristics associated with each row remain invariant. 

This assumption mirrors Assumption 2 of \citet{agarwal2020synthetic} under a row–column exchange and does not impose stronger structural constraints. For example, when rows correspond to users and columns to items, different treatment levels may represent varying exposure or intervention intensities. In such settings, it is natural to assume that a user’s intrinsic preferences are stable across treatments, while the realized outcomes change through treatment-specific loading matrices. 

We note that this assumption covers the low-rank tensor factorization model $Y_{i j}^{(d)} = \sum_{l\in[r]} u_{il} v_{jl} \lambda_{dl} + \epsilon_{i j}^{(d)}$ mentioned in \citet{agarwal2020synthetic} as a special case, by setting $v_{jl}^{(d)} = v_{jl} \lambda_{dl}$. 

\subsection{Identification}

Below we define $\boldsymbol{A} \coloneqq \mathbb{E}\left[ \boldsymbol{Y} \middle | \boldsymbol{U} ,\boldsymbol{V} \right]$ as the ground truth. The theoretical guarantee of Mixed Synthetic Nearest Neighbors (MSNN) we introduce below is the irrelevance of treatment in the estimation of $\beta$, which is followed by the low-rank assumption on treatment represented by latent row factor (Assumption \ref{assumption: same latent row factors}). 

\begin{lemma}\label{lemma: coefficient beta is irrelevant to treatment}

Under Assumption \ref{assumption: linear span inclusion on latent row factors} and \ref{assumption: same latent row factors}, the index set $\mathcal{I}^{(d)}(i)$ and coefficient $\beta^{(d)}\left(\mathcal{I}^{(d)}(i)\right)$ are irrelevant to treatment $d$, i.e. $\forall d^\prime \in \mathcal{L}$, $u_i^{(d^\prime)} = \sum_{l \in \mathcal{I}^{(d)}(i)} \beta_l^{(d)}\left(\mathcal{I}^{(d)}(i)\right) u_{l}^{(d^\prime)}$ for any valid $\mathcal{I}^{(d)}(i)$ and $\beta_l^{(d)}\left(\mathcal{I}^{(d)}(i)\right)$. 

\end{lemma}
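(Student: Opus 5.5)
The proof is a direct substitution argument that uses Assumption \ref{assumption: same latent row factors} to collapse all treatment-indexed row factors onto a single family $\{u_i\}_{i\in[m]}$. Fix $(i,d)$, a valid index set $\mathcal{I}^{(d)}(i)\subseteq[m]\setminus\{i\}$ with $|\mathcal{I}^{(d)}(i)|\ge\mu$, and a coefficient vector $\beta^{(d)}(\mathcal{I}^{(d)}(i))$ for which Assumption \ref{assumption: linear span inclusion on latent row factors} gives
\[
u_i^{(d)}=\sum_{l\in\mathcal{I}^{(d)}(i)}\beta^{(d)}_l\bigl(\mathcal{I}^{(d)}(i)\bigr)\,u_l^{(d)} .
\]
By Assumption \ref{assumption: same latent row factors}, $u_i^{(d)}=u_i$ and $u_l^{(d)}=u_l$ for every $l$. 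This identity therefore reads $u_i=\sum_{l}\beta_l^{(d)}u_l$, which involves no treatment index at all.

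Next I take an arbitrary $d'\in\mathcal{L}$. Applying Assumption \ref{assumption: same latent row factors} again, I replace $u_i$ by $u_i^{(d')}$ on the left and each $u_l$ by $u_l^{(d')}$ on the right. This gives $u_i^{(d')}=\sum_{l\in\mathcal{I}^{(d)}(i)}\beta^{(d)}_l(\mathcal{I}^{(d)}(i))\,u_l^{(d')}$, which is exactly the claimed identity. Since the argument used nothing about the particular valid pair $(\mathcal{I}^{(d)}(i),\beta^{(d)})$, it holds for every such pair.

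To justify the phrase ``irrelevant to treatment'' in the statement, I will also note the converse. The admissibility condition on $\mathcal{I}^{(d)}(i)$ in Assumption \ref{assumption: linear span inclusion on latent row factors} depends only on its cardinality and on $i$, not on $d$. Likewise, the set of coefficient vectors $\beta$ solving $u_i=\sum_l\beta_l u_l$ is the same affine set for every $d$, because the equation contains no $d$. Hence the collections of valid index sets and valid coefficients coincide across treatment levels, and one may drop the superscript $(d)$.

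There is no real obstacle here. The only delicate point is the non-uniqueness of $\beta$ when the $u_l$ are linearly dependent. The statement is phrased for \emph{any} valid $\beta$, so the proof should work with an arbitrary solution rather than a canonical one such as the minimum-norm solution; the substitution argument above already does this.
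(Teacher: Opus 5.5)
Your proposal is correct and is the paper's own proof: it chains $u_i^{(d')} = u_i^{(d)} = \sum_{l \in \mathcal{I}^{(d)}(i)} \beta_l^{(d)} u_l^{(d)} = \sum_{l \in \mathcal{I}^{(d)}(i)} \beta_l^{(d)} u_l^{(d')}$, using Assumption \ref{assumption: same latent row factors} at both ends and Assumption \ref{assumption: linear span inclusion on latent row factors} in the middle. Your extra remark, that the sets of valid index sets and valid coefficients coincide across treatment levels, is a correct supplement that the paper leaves implicit when it drops the superscript $(d)$.
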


Proof is presented in Appendix \ref{proof: lemma: coefficient beta is irrelevant to treatment}. Due to this lemma, the index set and coefficient are shared across treatments, and thus we omit the index $(d)$ for $\mathcal{I}^{(d)}(i)$ and $\beta^{(d)}$ in the derivation below. 

\begin{theorem}\label{theorem: identification}

Under Assumptions \ref{assumption: low-rank factorization}, \ref{assumption: selection on latent factors}, \ref{assumption: linear span inclusion on latent row factors}, \ref{assumption: same latent row factors}, given $(i,j,d)$ and $\mathcal{I}(i) \in [m] \backslash \{i\}$ such that $D_{lj,l \in \mathcal{I}(i)} = d$, then 

\(
A_{ij}^{(d)} = \sum_{l\in\mathcal{I}(i)} \beta_l\left(\mathcal{I}(i)\right) \mathbb{E}\left[ \tilde{Y}_{lj} \middle | \boldsymbol{U}, \boldsymbol{V}, \boldsymbol{D} \right]. 
\)


\end{theorem}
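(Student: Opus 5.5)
The argument is a direct computation. Start from the target $A_{ij}^{(d)}$, expand it using the low-rank factorization together with the shared row factors, and then replace each donor's latent signal by the conditional mean of its observed outcome. The condition $D_{lj}=d$ for every $l\in\mathcal{I}(i)$ guarantees that each such observation reveals the treatment-$d$ potential outcome.

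\textbf{Step 1: express the target.} By Assumption \ref{assumption: low-rank factorization} and the definition $\boldsymbol{A}=\mathbb{E}[\boldsymbol{Y}\mid\boldsymbol{U},\boldsymbol{V}]$, and since $\mathbb{E}[\boldsymbol{E}\mid\boldsymbol{U},\boldsymbol{V}]=0$ (the remark after Assumption \ref{assumption: selection on latent factors}), we have
\[
A_{ij}^{(d)}=\langle u_i^{(d)},v_j^{(d)}\rangle .
\]
By Lemma \ref{lemma: coefficient beta is irrelevant to treatment} and Assumption \ref{assumption: same latent row factors}, $u_i^{(d)}=\sum_{l\in\mathcal{I}(i)}\beta_l(\mathcal{I}(i))\,u_l^{(d)}$ with coefficients that do not depend on $d$. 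Here I take $|\mathcal{I}(i)|\ge\mu$, as implicitly required for the coefficients $\beta_l(\mathcal{I}(i))$ in the statement to exist via Assumption \ref{assumption: linear span inclusion on latent row factors}. Bilinearity then gives
\[
A_{ij}^{(d)}=\sum_{l\in\mathcal{I}(i)}\beta_l\langle u_l^{(d)},v_j^{(d)}\rangle .
\]

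\textbf{Step 2: link each donor term to the observed data.} Fix $l\in\mathcal{I}(i)$. Because $D_{lj}=d\ge1$, we have $\tilde Y_{lj}=Y_{lj}^{(D_{lj})}=Y_{lj}^{(d)}$. Since $\boldsymbol{D}$ is conditioned on, this equality holds inside the conditional expectation. Therefore
\[
\mathbb{E}[\tilde Y_{lj}\mid\boldsymbol{U},\boldsymbol{V},\boldsymbol{D}]=\langle u_l^{(d)},v_j^{(d)}\rangle+\mathbb{E}[\epsilon_{lj}^{(d)}\mid\boldsymbol{U},\boldsymbol{V},\boldsymbol{D}]=\langle u_l^{(d)},v_j^{(d)}\rangle ,
\]
where the noise term vanishes by Assumption \ref{assumption: selection on latent factors}. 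Substituting into Step 1 yields the claim.

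\textbf{Main obstacle.} The delicate point is Step 2. We must justify replacing $\tilde Y_{lj}$ by $Y_{lj}^{(d)}$ inside an expectation conditioned on $\boldsymbol{D}$, which is valid because the event $\{D_{lj}=d\}$ is measurable with respect to the conditioning. We must also check that the selection mechanism does not bias the noise: this is exactly what Assumption \ref{assumption: selection on latent factors} supplies, since it is stated conditionally on $\boldsymbol{D}$ rather than only marginally. Finally, the $\beta_l$ are deterministic functions of $\boldsymbol{U}$ and $\mathcal{I}(i)$, so they can be moved outside the conditional expectation. If $\mathcal{I}(i)$ is selected based on $\boldsymbol{D}$, this remains valid because the identity holds pointwise for every realized $\boldsymbol{D}$.
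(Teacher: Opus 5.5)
Your proof is correct and follows the paper's argument essentially step for step. You reduce $A_{ij}^{(d)}$ to $\langle u_i^{(d)},v_j^{(d)}\rangle$ using the vanishing noise mean, expand $u_i^{(d)}$ through the donor coefficients, and identify each donor inner product with $\mathbb{E}[\tilde Y_{lj}\mid \boldsymbol{U},\boldsymbol{V},\boldsymbol{D}]$ via Assumption~\ref{assumption: selection on latent factors} and $D_{lj}=d$. Your remark that $|\mathcal{I}(i)|\ge\mu$ is needed for $\beta(\mathcal{I}(i))$ to exist is a fair point that the paper's statement leaves implicit.
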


Proof is presented in Appendix \ref{proof: theorem: identification}. Similar to the identification theorem in \citet{agarwal2023causal}, one can estimate the potential outcome $A_{ij}^{(d)}$ through the linear model parameter $\beta$. The main difference under multiple treatment is that, here the estimation of a valid $\beta$ does not require the data from the same treatment level. 

\section{From SNN to MSNN: Data Integration across Treatment levels}

In this section, we introduce MSNN (Mixed Synthetic Nearest Neighbors) to address the lack of data in multiple treatment setting. 

\subsection{SNN under multiple treatment}

Denote $x(d) \coloneqq \left[ \tilde{Y}_{aj} \right]_{a \in \mathrm{AR}(d)} $, $q(d) \coloneqq \left[ \tilde{Y}_{ib} \right]_{b \in \mathrm{AC}(d)}$. ($q$ is a row vector. ) From the definition of $\mathrm{AR}(d)$, $x(d) = \left[ Y_{aj}^{(d)} \right]_{a \in \mathrm{AR}(d)}$, $q(d) = \left[ Y_{ib}^{(d)} \right]_{b \in \mathrm{AC}(d)}$, which is exactly the potential outcome at treatment level $d$. $(\cdot)^{(k)}$ means the $k^{th}$ group out of total group $K_{\rm SNN}$. Now we present the SNN algorithm in Algorithm \ref{algorithm: SNN}. 

To estimate potential outcome on $(i,j)$ under treatment $d$, trivial SNN on multiple treatment needs to select Anchor Rows (AR) and Anchor Columns (AC) specific to treatment level $d$ such that: $D_{ab: a \in \mathrm{AR}(d), b \in \mathrm{AC}(d)} = d$, $D_{aj: a \in \mathrm{AR}(d)} = d$, $D_{ib: b \in \mathrm{AC}(d)} = d$. 

\begin{algorithm}[t]
  \caption{SNN$(i,j,d)$}
  \label{algorithm: SNN}
  \begin{algorithmic}
    \STATE Input: $\left\{ \lambda^{(k)}(d): k \in [K_{\rm SNN}], d \in \mathcal{L} \right\}$, $\left\{ \left( \mathrm{AR}^{(k)}(d), \mathrm{AC}^{(k)}(d) \right): k \in [K_{\rm SNN}] \right\}$ with mutually disjoint sets $\left\{ \mathrm{AR}^{(k)}(d): k \in [K_{\rm SNN}] \right\}$. 
    \FOR {$k \in [K_{\rm SNN}]$}
      \STATE 1. Define $\boldsymbol{S}^{(k)}(d) = \left[ \tilde{Y}_{ab}: (a,b) \in \mathrm{AR}^{(k)}(d) \times \mathrm{AC}^{(k)}(d) \right]$
      \STATE 2. Compute SVD decomposition $\boldsymbol{S}^{(k)}(d) \leftarrow \sum_{l \ge 1} \hat{\tau}_l^{(k)}(d)  \hat{u}_l^{(k)}(d) \otimes \hat{v}_l^{(k)}(d)$
      \STATE 3. Compute $\hat{\beta}^{(k)}(d) \leftarrow \Big( \sum_{l \le \lambda^{(k)}(d)} \left(1/\hat{\tau}_l^{(k)}(d)\right) \hat{u}_l^{(k)}(d) \otimes \hat{v}_l^{(k)}(d) \Big) q^{(k)}(d)$
      \STATE 4. Compute $\hat{A}_{ij,k}(d) \leftarrow \left\langle x^{(k)}(d), \hat{\beta}^{(k)}(d) \right\rangle$
    \ENDFOR
    \STATE Output $\hat{A}_{ij}^{(d)} \leftarrow  \frac{1}{K_{\rm SNN}} \sum_{k=1}^{K_{\rm SNN}} \hat{A}_{ij,k}(d)$
  \end{algorithmic}
\end{algorithm}

\subsection{Mixed Synthetic Nearest Neighbors: data combination}

MSNN leverages data across different treatment levels as ensured by Theorem \ref{theorem: identification}. The key insight is that, while the linear combination factors $x(d)$ come from the same treatment level as the estimated target treatment level $d$, the estimation of the coefficient vector $\beta$ can incorporate information from other treatments. This observation motivates relaxing the notions of Anchor Rows (AR) and Anchor Columns (AC) to their mixed counterparts - Mixed Anchor Rows (MAR) and Mixed Anchor Columns (MAC) - defined as follows. 

Mixed Anchor Rows (MAR) and Mixed Anchor Columns (MAC) at treatment level $d$ for entry $(i,j)$ satisfy the following conditions: 1. $\forall a \in \mathrm{MAR}(d), b \in \mathrm{MAC}(d)$, $D_{ab: a \in \mathrm{MAR}(d), b \in \mathrm{MAC}(d)} = D_{ib: b \in \mathrm{MAC}(d)} \eqqcolon d(b) \ne 0$ (where we call column treatment level below); 2. $\forall a \in \mathrm{MAR}(d)$, $D_{aj: a \in \mathrm{MAR}(d)} = d$. 

Intuitively, this means that within the block $\mathrm{MAR}(d) \times \{ \mathrm{MAC}(d) \cap \{j\} \}$, every column shares the same (non-zero) treatment assignment as the row $i$ corresponding to those columns. Now $x(d) = \left[ Y_{aj}^{(d)} \right]_{a \in \mathrm{MAR}(d)}$ remains, however $q(d) = \left[ Y_{ib}^{(d(b))} \right]_{b \in \mathrm{MAC}(d)}$ where entries may consist of different treatment levels. 

To balance the probable scale and variance heterogeneity introduced by mixed treatment levels, we also introduce positive weight functions $w(j,d): [n] \times \mathcal{L} \to \mathbb{R}_+$. 

We denote the weighted $q(d)$ as $q_w(d) \coloneqq \left[ w(b,D_{ib}) \cdot \tilde{Y}_{ib} \right]_{b \in \mathrm{MAC}(d)} = \left[ w(b,d(b)) \cdot Y_{ib}^{(d(b))} \right]_{b \in \mathrm{MAC}(d)}$. Now we can introduce our MSNN algorithm in Algorithm \ref{algorithm: MSNN} above. A illustrative comparison between SNN and MSNN is also provided in Figure \ref{fig: Comparison between SNN and MSNN}.    

\begin{algorithm}[t]
  \caption{MSNN$(i,j,d)$}
  \label{algorithm: MSNN}
  \begin{algorithmic}
    \STATE Input: $\left\{ \lambda^{(k)}(d): k \in [K_{\rm MSNN}], d \in \mathcal{L} \right\}$, 
    $\left\{ \left( \mathrm{MAR}^{(k)}(d), \mathrm{MAC}^{(k)}(d) \right): k \in [K_{\rm MSNN}] \right\}$ with mutually disjoint sets $\left\{ \mathrm{MAR}^{(k)}(d): k \in [K_{\rm MSNN}] \right\}$ w.r.t. column treatment levels $d\left( b: b \in \mathrm{MAC}^{(k)}(d) \right)$. 
    \FOR {$k \in [K_{\rm MSNN}]$}
      \STATE 1. Define $\boldsymbol{S}_w^{(k)}(d) = \left[ w\left( b, d( b)\right) \cdot \tilde{Y}_{ab}: (a,b) \in \mathrm{MAR}^{(k)}(d) \times \mathrm{MAC}^{(k)}(d) \right]$
      \STATE 2. Compute SVD decomposition $\boldsymbol{S}_w^{(k)}(d) \leftarrow \sum_{l \ge 1} \hat{\tau}_l^{(k)}(d)  \hat{u}_l^{(k)}(d) \otimes \hat{v}_l^{(k)}(d)$
      \STATE 3. Compute $\hat{\beta}^{(k)}(d) \leftarrow \Big( \sum_{l \le \lambda^{(k)}(d)} \left(1/\hat{\tau}_l^{(k)}(d)\right) \hat{u}_l^{(k)}(d) \otimes \hat{v}_l^{(k)}(d) \Big) q_w^{(k)}(d)$
      \STATE 4. Compute $\hat{A}_{ij,k}(d) \leftarrow \left\langle x^{(k)}(d), \hat{\beta}^{(k)}(d) \right\rangle$
    \ENDFOR
    \STATE Output $\hat{A}_{ij}^{(d)} \leftarrow  \frac{1}{K_{\rm MSNN}} \sum_{k=1}^{K_{\rm MSNN}} \hat{A}_{ij,k}(d)$
  \end{algorithmic}
\end{algorithm}

\begin{figure*}[h]
\centering
\begin{subfigure}{\linewidth}
\includegraphics[width=0.19\linewidth]{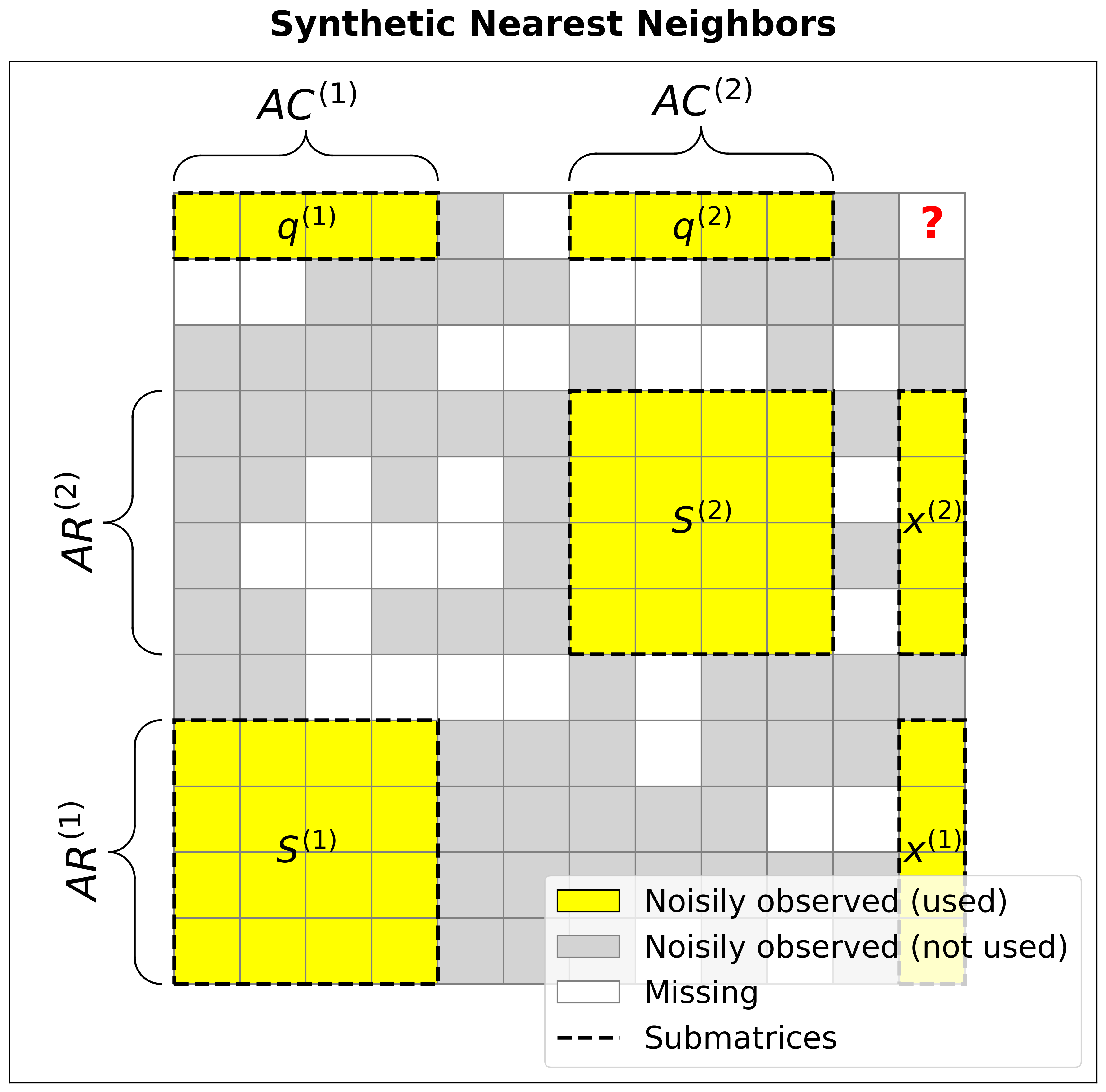}\includegraphics[width=0.19\linewidth]{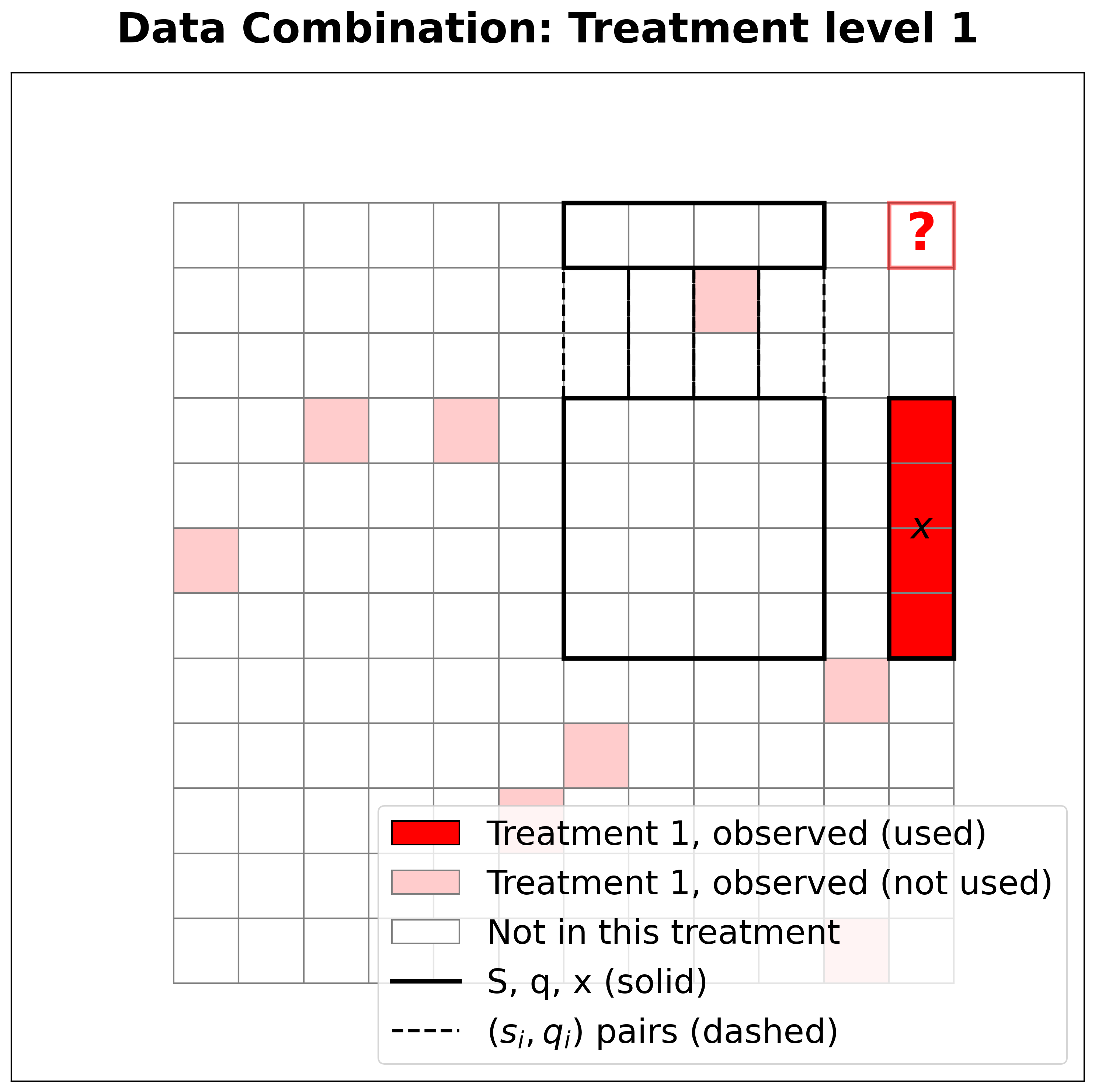} 
    \includegraphics[width=0.19\linewidth]{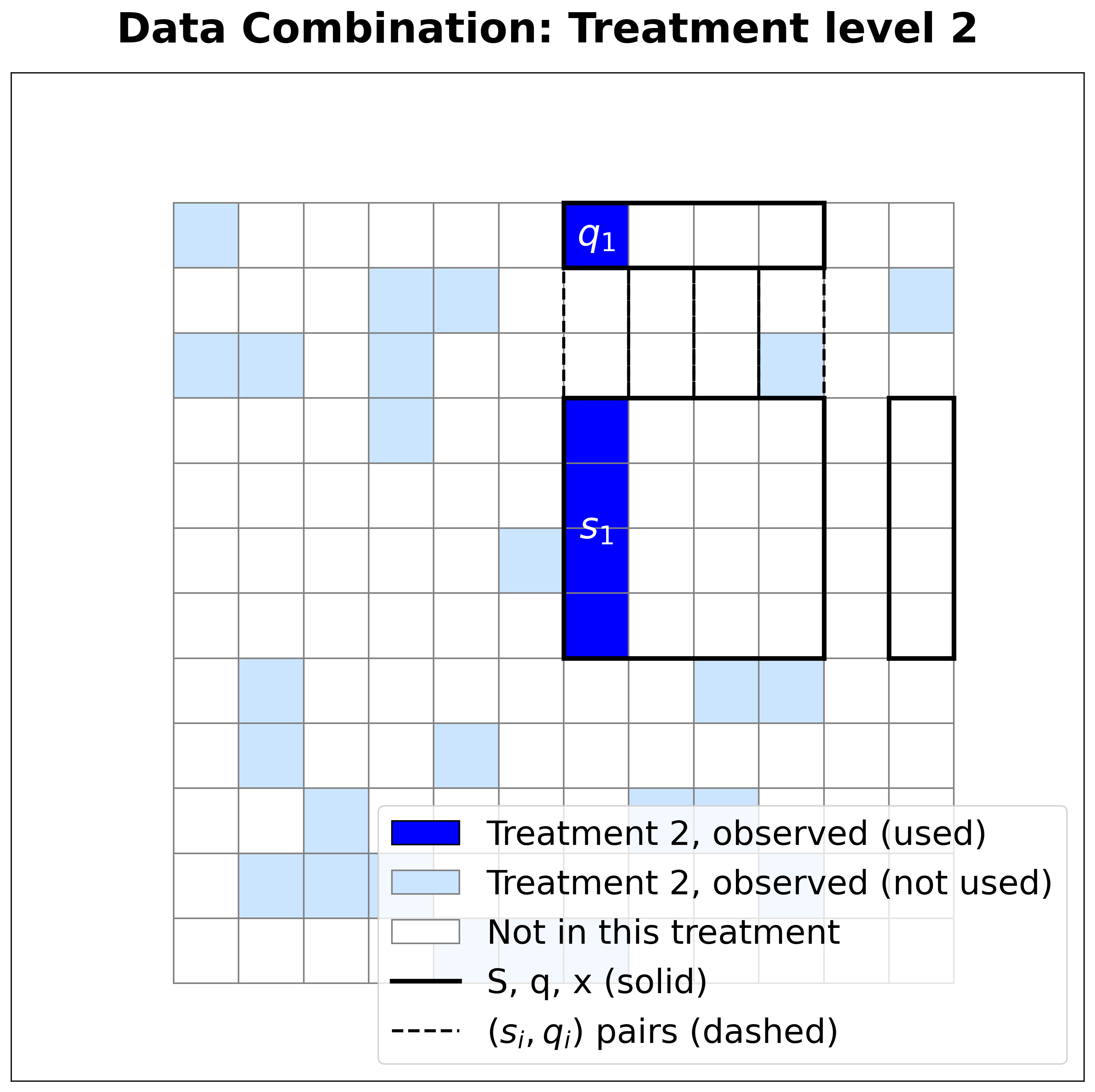} 
    \includegraphics[width=0.19\linewidth]{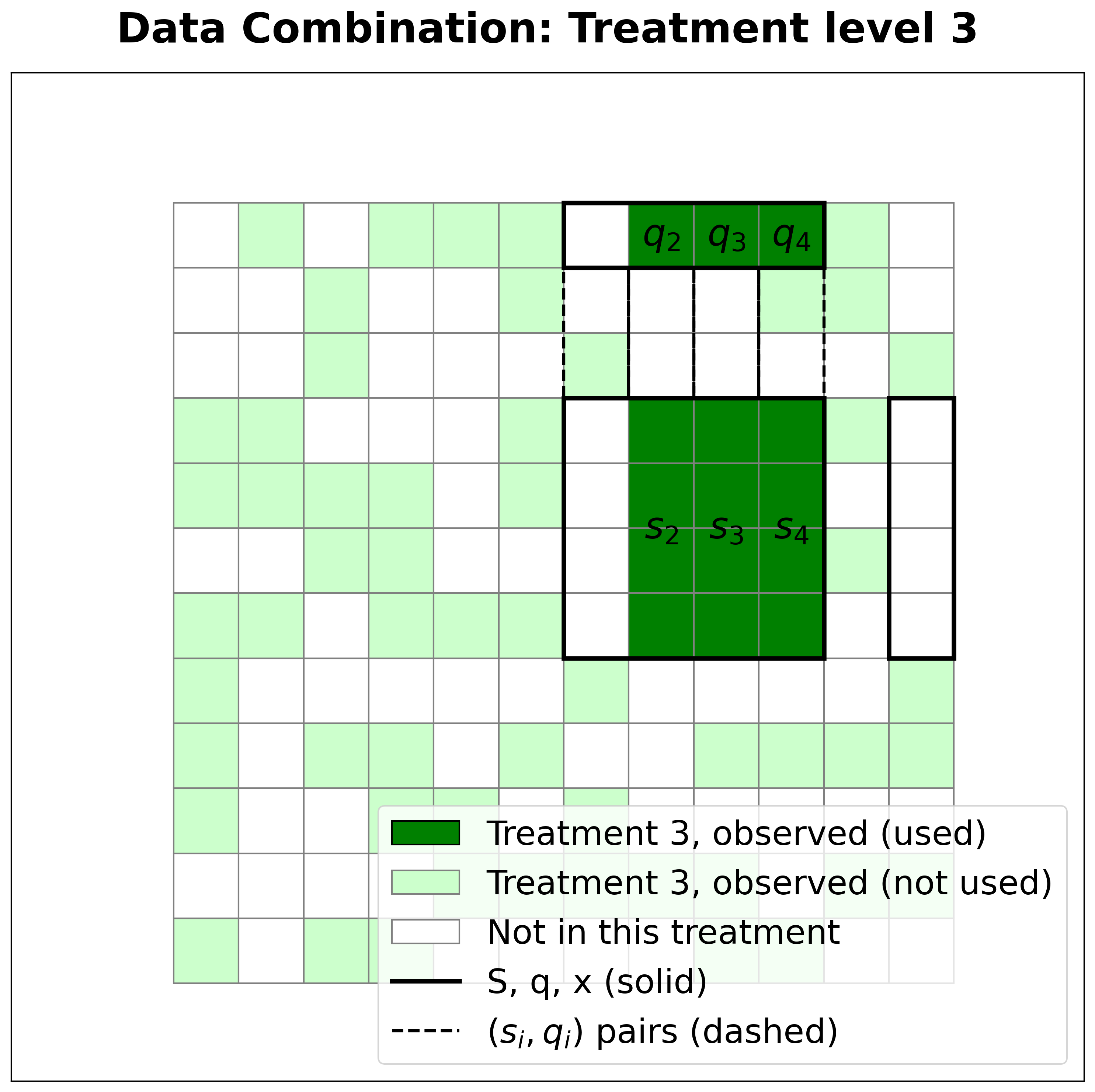} 
    \includegraphics[width=0.19\linewidth]{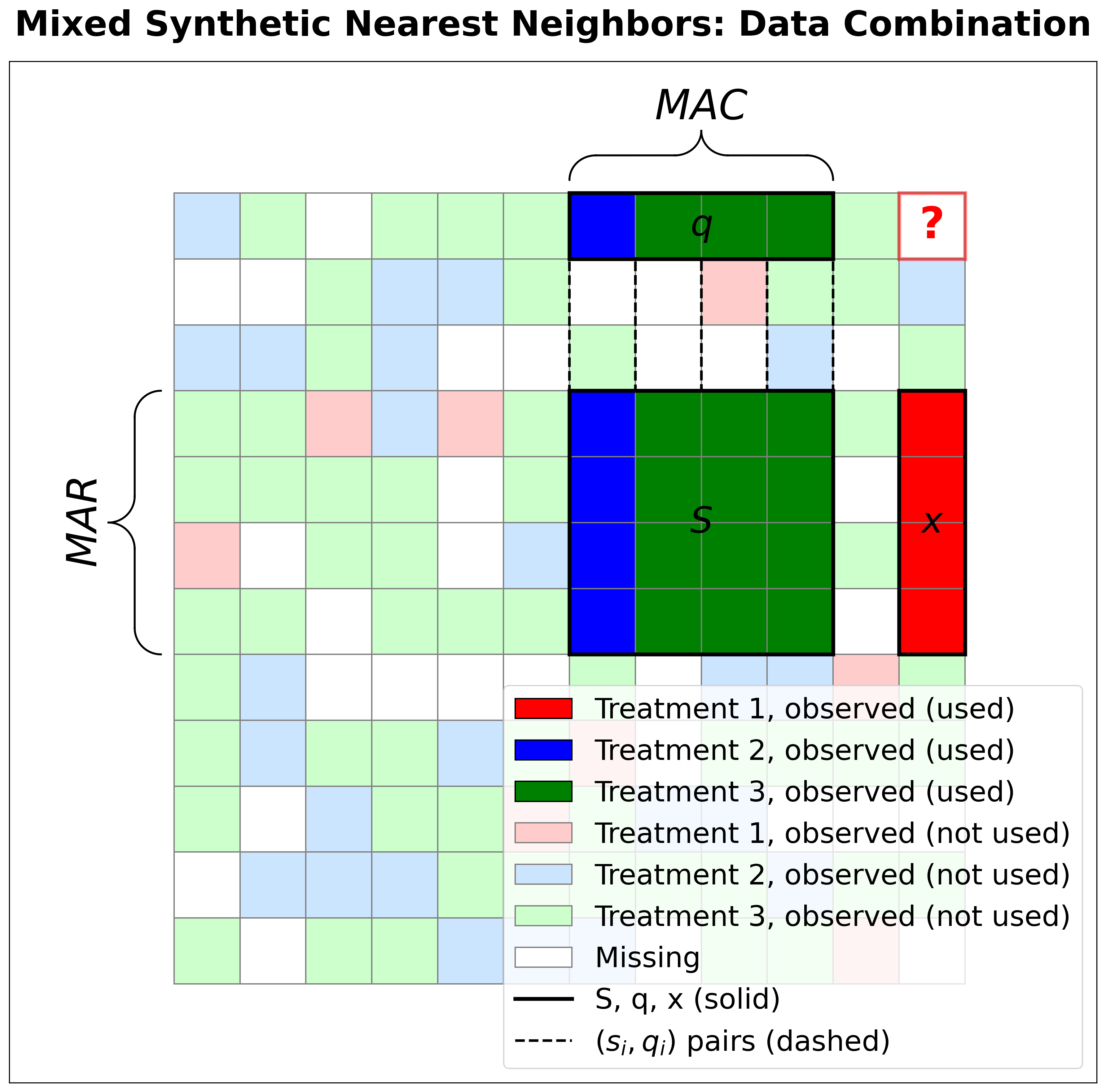} 
\end{subfigure}
\caption{Comparison between SNN and MSNN. The leftmost subfigure illustrates the SNN algorithm with $K_{\rm SNN} = 2$: it requires $\boldsymbol{S}^{(k)}$, $q^{(k)}$ and $x^{(k)}$ are all fully observed at treatment level as the same of the estimated treatment, which is rare under data-scarce levels (e.g. the ``red'' level in the second subfigure). The rest four subfigures explain the procedure of MSNN for a specific subgroup $k$: given entry $(i,j)$ and estimated treatment level (here is ``red''), one need to find a fully observed $x^{(k)}$ under same ``red'' level, but the $\boldsymbol{S}^{(k)}$ and $q^{(k)}$ can be integrated from other treatments (here: ``blue'' and ``green'' level). The only requirement is that for each column of $\boldsymbol{S}^{(k)}$ (namely, $s_i^{(k)}$), its treatments should be as the same as the treatment of corresponding $q_i^{(k)}$, see the third and fourth subfigures. }
\label{fig: Comparison between SNN and MSNN}
\end{figure*}









\begin{remark} Utility of weight $w(b,d(b))$. 
Since $\boldsymbol{S}_w^{(k)}(d)$ is a combination of observed data from different treatment levels, heterogeneity of different level of data may affect its conditional number, causing the matrix to be ill-conditioned and introducing numerical instability to the SVD step of Algorithm \ref{algorithm: MSNN}. We provide a simple illustration below: 

\textbf{Illustration. }Consider $\boldsymbol{S}_w^{(k)}(d)$ consists of two columns of data from treatment level $d(1)$ and $d(2)$, where data from $d(2)$ has large scale of $C$, consider $\boldsymbol{S}_w^{(k)}(d) = \begin{pmatrix}
    1 & 0 \\
    0 & C 
\end{pmatrix}$, then the conditional number of $\boldsymbol{S}_w^{(k)}(d)$ is $C \gg 1$. However if we apply $w(1,d(1))=1$, $w(1,d(1))=1/C$, then $\boldsymbol{S}_w^{(k)}(d) = \begin{pmatrix}
    1 & 0 \\
    0 & 1 
\end{pmatrix}$, which has a conditional number of $1$. 

A natural selection of $w(b,d(b))$ is inversely proportional to the scale of treatment level $d(b)$ (that is, normalizing all observed outcomes to the same scale), under which we establish the following finite-sample bound and asymptotic normality. When the scale factors are unknown in practice, a simple and feasible approach is to estimate using the maximum observed absolute entry within each treatment.

\end{remark}

\subsection{Finding Mixed Anchor Rows and Columns}

Given treatment assignment $\boldsymbol{D}$, we can construct the $\{0,1\}$ valued matrix $\boldsymbol{B}\in \mathbb{R}^{m\times n}$ by $\boldsymbol{B} = \left[ \mathbf{1}\{D_{ab} = D_{ib}, D_{aj} = d, a\ne i, b \ne j\} \right]$, which indicates whether an entry of $\tilde{\boldsymbol{Y}}$ is usable for constructing $\mathrm{MAR}(d)$ and $\mathrm{MAC}(d)$. Giving $K_{\rm MSNN}$ subgroups of $(\mathrm{MAR}^{(k)}(d),\mathrm{MAC}^{(k)}(d))$, they equivalently represents $K_{\rm MSNN}$ all-$1$ submatrices of $\boldsymbol{B}$ with disjoint columns. 

Now if given $|\mathrm{MAR}(d)| = r$ and $|\mathrm{MAC}(d)| = c$, the problem equivalently reduces to selecting $K_{\rm MSNN}$ all-$1$ submatrices from $\boldsymbol{B}$ with disjoint columns. By modifying algorithm $\texttt{AnchorSubMatrix}$ discussed in \cite{agarwal2023causal}, we can construct $\mathrm{MAR}(d)$ and $\mathrm{MAC}(d)$ by our Algorithm \ref{algorithm: MixedAnchorSubMatrix}: $\texttt{MixedAnchorSubMatrix}$. Then given $\mathrm{MAR}(d)$ and $\mathrm{MAC}(d)$, we construct $K_{\rm MSNN}$ subgroups by selecting $\mathrm{MAC}^{(k)}(d) = \mathrm{MAC}(d)$, while partition $\mathrm{MAR}(d)$ into $K_{\rm MSNN}$ subgroups of equal size randomly. 

Here we assume $\texttt{createGraph}$ and $\texttt{maxBiclique}$ as two known algorithms. $\texttt{createGraph}$ takes input $\boldsymbol{B}$ as an bipartite incident matrix for an bipartite graph $\mathcal{G}$ (that is, $\mathcal{G} = (\mathcal{V}_1, \mathcal{V}_2, \mathcal{E})$ where $|\mathcal{V}_1| = m$ and $\mathcal{V}_2 = n$; for $v_i \in \mathcal{V}_1$ and $v_j \in \mathcal{V}_1$, $(v_i, v_j)\in\mathcal{E}$ if and only if $\boldsymbol{B}_{ij}=1$) and outputs $\mathcal{G}$. $\texttt{createGraph}$ takes bipartite $\mathcal{G}$ as input and then outputs the maximal bipartite clique. 

\begin{algorithm}[t]
  \caption{MixedAnchorSubMatrix $(i,j,d)$}
  \label{algorithm: MixedAnchorSubMatrix}
  \begin{algorithmic}
    \STATE Input: \texttt{createGraph}, $\texttt{maxBiclique}$ 
    
    \STATE 1. Assign $\boldsymbol{B} = \left[ \mathbf{1}\{D_{ab} = D_{ib}, D_{aj} = d, a\ne i, b \ne j\} \right]$ 

    \STATE 2. Generate $\mathcal{G} \leftarrow \texttt{createGraph}(\boldsymbol{B})$ 

    \STATE 3. Compute $(\mathcal{V}_1^\prime, \mathcal{V}_2^\prime, \mathcal{E^\prime}) \leftarrow \texttt{maxBiclique}(\mathcal{G})$ 
    
    \STATE 4. Output $\mathrm{MAR}(d) \leftarrow \mathcal{V}_1^\prime$, $\mathrm{MAC}(d) \leftarrow \mathcal{V}_2^\prime$ 
  \end{algorithmic}
\end{algorithm}

\section{Theoretical results}

We further add Assumptions to establish results on finite-sample bound and asymptotic normality. 

\subsection{Additional assumptions}

By following the notation in ~\citet{agarwal2023causal}, below we denote $\mathcal{E} = \{\boldsymbol{U}, \boldsymbol{V}, \boldsymbol{E}\}$. Following the requirements of SNN \citep{agarwal2023causal}, we further states the following assumptions under the multiple treatment setting: 

\begin{assumption}\label{assumption: additional, bounded expected potential outcomes} (Bounded expected potential outcomes with treatment heterogeneity). Conditioned on $\mathcal{E}$, $A_{ij}^{(d)} \in [-f(d), f(d)]$, where $f(d): \mathcal{L} \to \mathbb{R}^{+}$ characterizes the range of potential outcome for each treatment. 

\end{assumption}

\begin{assumption}\label{assumption: additional, subgaussian noise} (Sub-gaussian noise with treatment heterogeneity). Conditioned on $\mathcal{E}$, the error $\epsilon_{ij}^{(d)}$ are independent sub-gaussian mean-zero random variables with $\mathbb{E} \left[ \left( \epsilon_{ij}^{(d)} \right)^2 \right] = \left( \sigma_{ij}^{(d)}\right)^2 $ and $\left\| \epsilon_{ij}^{(d)} \right\|_{\psi_2} \le C \sigma_{ij}^{(d)}$, where $\sigma_{ij}^{(d)} \le f(d) \sigma $ for some universal $\sigma > 0$ and constant $C > 0$. 
  
\end{assumption}

\begin{assumption}\label{assumption: additional, well balanced spectra} (Well balanced spectra under appropriate scaling). For every treatment level $d \in \mathcal{L}$, conditioned on $\mathcal{E}$ and given $(i,j)$ with subgroup $k$, if $w(b,d(b)) = 1/f(d(b))$ ($b\in\mathrm{MAC}^{(k)}(d)$), the non-zero singular values of $\mathbb{E} \left[ \boldsymbol{S}_w^{(k)}(d) \middle | \mathcal{E} \right]$ (namely, $\tau_{l\in\left[ r^{(k)} \right]}^{(k)}(d)$) are well balanced, i.e. there exists universal constants $c, c^\prime > 0$ such that 
$\tau_{r^{(k)}}^{(k)}(d) / \tau_{1}^{(k)}(d) \ge c$ and $\tau_1^{(k)}(d) \le c^\prime \cdot \left| \mathrm{MAR}^{(k)}(d) \right| \left| \mathrm{MAC}^{(k)}(d) \right|$.


\end{assumption}


\begin{assumption}\label{assumption: additional, subspace inclusion}(Subspace inclusion). 
Conditioned on $\mathcal{E}$, given $(i,j,d)$ with subgroup $k$, 

$\mathbb{E} \left[ x^{(k)}(d) \middle | \boldsymbol{U}, \boldsymbol{V}, \boldsymbol{D} \right] \in \col \left( \mathbb{E} \left[ \boldsymbol{S}_w^{(k)}(d) \middle | \boldsymbol{U}, \boldsymbol{V}, \boldsymbol{D} \right] \right). $


\end{assumption}

\subsection{Preservation of Finite-sample bound and Asymptotic Normality}

Under the above assumptions, the data that MSNN leverages (in Algorithm \ref{algorithm: MSNN}) exactly satisfy the conditions that SNN requires, thus the similar conclusions of finite-sample bound and asymptotic normality (refer to Theorem 2 and 3 of \citet{agarwal2023causal}) can be transferred from SNN to MSNN: 

\begin{theorem}\label{theorem: finite-sample bound} (Finite-sample error bound). 
Conditioned on $\mathcal{E}$, for a given entry $(i,j)$ and treatment level $d$, suppose $\left| \mathrm{MAR}^{(k)}(d) \right| \ge \mu$ 
for all subgroup $k\in [K_{\rm MSNN}]$, let Assumptions \ref{assumption: low-rank factorization} to \ref{assumption: additional, well balanced spectra} hold. Further let $K_{\rm MSNN} = o\left(\min_k \left|\mathrm{MAC}^{(k)}\right|^{10} \left|\mathrm{MAR}^{(k)}\right|^{10} \right)$. Finally set $\lambda^{(k)} = \rank\left(\mathbb{E}\left[\boldsymbol{S}_w^{(k)}\right]\right)$, $w(b,d(b)) = 1/f(d(b))$, then $\hat{A}_{ij}^{(d)} - A_{ij}^{(d)} =
   f(d)\cdot O_p\left( \frac{1}{K_{\rm MSNN}} \left\{\text{error}+ \left[ \sum_{k=1}^{K_{\rm MSNN}} \|\tilde{\beta}^{(k)}\|_2^2 \right]^{1/2} \right\} \right), 
$

where $\text{error} \coloneqq \sum_{k=1}^{K_{\rm MSNN}} \left(\text{error}_{k,1} + \text{error}_{k,2}\right)$. Here, $\tilde{\beta}^{(k)} \coloneqq \mathcal{P}_{\mathbb{E}\left[ \boldsymbol{S}_w^{(k)}(d) \middle | \mathcal{E} \right]} \beta^{(k)}$ is the projection of $\beta^{(k)}$ onto the column space of $\mathbb{E}\left[ \boldsymbol{S}_w^{(k)}(d) \middle | \mathcal{E} \right]$, and 

$\text{error}_{k,1} \coloneqq\frac{(r^{(k)})^{1/2}}{|\mathrm{MAC}^{(k)}(d)|^{1/4}}$, 

$\text{error}_{k,2} \coloneqq \frac{(r^{(k)})^{3/2} \|\tilde{\beta}^{(k)}\|_1 \log^{1/2} (|\mathrm{MAC}^{(k)}(d)| |\mathrm{MAR}^{(k)}(d)|)}{\min\{|\mathrm{MAC}^{(k)}(d)|^{1/2}, |\mathrm{MAR}^{(k)}(d)|^{1/2}\}}$. 


\end{theorem}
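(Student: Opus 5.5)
The plan is a reduction to Theorem 2 of \citet{agarwal2023causal}: rescale the problem by $f(d)$ so that the weighted MSNN data $(\boldsymbol{S}_w^{(k)}(d), q_w^{(k)}(d), x^{(k)}(d))$ form a valid instance of the binary-treatment SNN model, conditionally on $\mathcal{E}$ and $\boldsymbol{D}$. Then invoke that bound subgroup by subgroup and average over $k$.

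\textbf{Step 1: normalise and write an SNN-type factor model.} Fix $k$. For $b \in \mathrm{MAC}^{(k)}(d)$ and $a \in \mathrm{MAR}^{(k)}(d) \cup \{i\}$, the definition of the mixed anchors gives $D_{ab} = D_{ib} = d(b)$. With $w(b,d(b)) = 1/f(d(b))$, the weighted entries are
\[
w(b,d(b))\tilde{Y}_{ab} = \langle u_a, v_b^{(d(b))}/f(d(b)) \rangle + \epsilon_{ab}^{(d(b))}/f(d(b)).
\]
This uses Assumption \ref{assumption: same latent row factors} to drop the treatment index on $u_a$. Define the column factors $\bar v_b := v_b^{(d(b))}/f(d(b))$ and the noise $\bar\epsilon_{ab} := \epsilon_{ab}^{(d(b))}/f(d(b))$. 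Then $\boldsymbol{S}_w^{(k)}(d)$ and $q_w^{(k)}(d)$ come from a single rank-$r$ model with common row factors $u$ and the following normalised properties:
\begin{itemize}
\item signal bounded by $1$, by Assumption \ref{assumption: additional, bounded expected potential outcomes};
\item independent mean-zero sub-gaussian noise with $\|\bar\epsilon_{ab}\|_{\psi_2} \le C\sigma$, by Assumptions \ref{assumption: selection on latent factors} and \ref{assumption: additional, subgaussian noise};
\item well-balanced spectrum, by Assumption \ref{assumption: additional, well balanced spectra}.
\end{itemize}
The target row $x^{(k)}(d)$ is single-level. Dividing it by $f(d)$ gives entries bounded by $1$ with noise proxy $C\sigma$, and this factor $1/f(d)$ is what produces the prefactor $f(d)$ in the final statement.

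\textbf{Step 2: identification of $\beta$ within the subgroup.} Since $|\mathrm{MAR}^{(k)}(d)| \ge \mu$, Assumption \ref{assumption: linear span inclusion on latent row factors} and Lemma \ref{lemma: coefficient beta is irrelevant to treatment} give a $\beta^{(k)}$ with $u_i = \sum_a \beta^{(k)}_a u_a$, valid for every level. Consequently
\[
\mathbb{E}[q_w^{(k)} \mid \mathcal{E},\boldsymbol{D}] = \mathbb{E}[\boldsymbol{S}_w^{(k)} \mid \mathcal{E},\boldsymbol{D}]^\top \beta^{(k)},
\]
column by column, because each column shares its level $d(b)$ with the corresponding entry of $q$. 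Likewise $A^{(d)}_{ij} = \langle \mathbb{E}[x^{(k)}], \beta^{(k)} \rangle$ by Theorem \ref{theorem: identification}, since $D_{aj} = d$. Replacing $\beta^{(k)}$ by $\tilde\beta^{(k)}$ changes neither identity, since the component orthogonal to the relevant column space contributes nothing. These are exactly the structural inputs of the SNN analysis, and the MAR/MAC sets are chosen from $\boldsymbol{D}$ alone. By $\boldsymbol{D}\indep\boldsymbol{Y}\mid\boldsymbol{A}$, conditioning on $\boldsymbol{D}$ preserves the noise distribution, so the MNAR selection is harmless.

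\textbf{Step 3: per-subgroup bound, then aggregation.} Apply the SNN per-subgroup decomposition: $\hat A_{ij,k} - A_{ij}^{(d)}$ splits into a term from the perturbation of the rank-$\lambda^{(k)}$ pseudo-inverse and a term linear in the noise of $x^{(k)}$.
\begin{itemize}
\item The first term uses Wedin/Weyl bounds together with the spectral assumption and sub-gaussian operator-norm concentration. It yields $\mathrm{error}_{k,1} + \mathrm{error}_{k,2}$, up to a factor $f(d)$, with high probability.
\item The second term is $\langle x^{(k)} - \mathbb{E}x^{(k)}, \hat\beta^{(k)} \rangle$. It is conditionally mean zero with variance of order $f(d)^2\|\tilde\beta^{(k)}\|_2^2$, plus a lower-order contribution from $\hat\beta^{(k)} - \tilde\beta^{(k)}$.
\end{itemize}
Since the $\mathrm{MAR}^{(k)}$ are disjoint, the noise of the different $x^{(k)}$ is independent. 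Summing the second term over $k$ and using Chebyshev therefore gives $[\sum_k \|\tilde\beta^{(k)}\|_2^2]^{1/2}$. The first terms add via a union bound over $k$. The high-probability events in SNN fail with probability polynomially small in $|\mathrm{MAC}^{(k)}||\mathrm{MAR}^{(k)}|$, and the condition $K_{\rm MSNN} = o(\min_k |\mathrm{MAC}^{(k)}|^{10}|\mathrm{MAR}^{(k)}|^{10})$ is exactly what makes this union bound vanish. Dividing by $K_{\rm MSNN}$ gives the claim.

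\textbf{Main obstacle.} The delicate point is Step 1's claim that the mixed matrix genuinely satisfies the SNN hypotheses. In particular, $\hat\beta^{(k)}$ is built from $q_w$ and $\boldsymbol{S}_w$ at mixed levels yet is applied to $x^{(k)}$ at level $d$, so one must check that the subspace inclusion (Assumption \ref{assumption: additional, subspace inclusion}) and the projection $\tilde\beta^{(k)}$ are taken with respect to the same column space as in the SNN proof. The plan here is to verify that the row space of $\mathbb{E}[\boldsymbol{S}_w^{(k)}]$ is spanned by $\{u_a\}$ regardless of the column levels, which relies on Assumption \ref{assumption: same latent row factors}, so the SNN argument goes through verbatim.
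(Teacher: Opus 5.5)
Your proposal is correct and follows the paper's proof. The paper likewise rescales by $\boldsymbol{F} = [1/f(D_{ab})]$ so that $\hat{A}_{ij,k}^{(d)} = f(d)\,\hat{A}_{ij,k}^{(d)\prime}$, builds a surrogate model $\boldsymbol{U}\boldsymbol{V}^{\prime\prime\top} + \boldsymbol{E}^{\prime\prime}$ in which column $b \in \mathrm{MAC}^{(k)}(d)$ carries the factor $v_b^{(d(b))}/f(d(b))$ (level $d$ elsewhere, including column $j$), and then cites Theorem 2 of \citet{agarwal2023causal} directly, where your Step 3 re-sketches its per-subgroup decomposition and union bound.

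One correction to your final paragraph: you cannot derive subspace inclusion from the shared row factors. It is a condition on the column factors, namely whether $v_j^{(d)}/f(d)$ is reproduced, as seen through the anchor rows, by the mixed-level factors $v_b^{(d(b))}/f(d(b))$, $b \in \mathrm{MAC}^{(k)}(d)$. You should simply invoke it as Assumption \ref{assumption: additional, subspace inclusion}. Your Step 2 and the paper's reduction both rely on it, even though the statement's assumption range stops at Assumption \ref{assumption: additional, well balanced spectra}.
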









\begin{theorem}\label{theorem: consistency} (Asymptotic Normality). 
For a given entry $(i,j)$ and treatment level $d$, let the setups of Theorem \ref{theorem: finite-sample bound} holds. Define $(\tilde{\sigma}^{(k)}(d))^2 \coloneqq \sum_{l\in \mathrm{MAR}^{(k)}(d)} (\tilde{\beta}_l^{(k)} \sigma_{lj}^{(d)})^2.$ Further assume that 

(i) $K_{\rm MSNN} \to \infty$, 

(ii) $|\mathrm{MAC^{(k)}(d)}|, |\mathrm{MAR^{(k)}(d)}| \to +\infty$ for each $k$, 

(iii) For each $k$, 
$r^{(k)} \|\tilde{\beta}^{(k)}\|_1^2 \log (|\mathrm{MAC}^{(k)}(d)| |\mathrm{MAR}^{(k)}(d)|) = o\left(\min\{ |\mathrm{MAC}^{(k)}(d)| |\mathrm{MAR}^{(k)}(d)| \}\right)$, 


(iv) $
    f(d) \cdot \text{error} = o\left( \left[ \sum_{k=1}^{K_{\rm MSNN}} (\tilde{\sigma}^{(k)} (d))^2 \right]^{1/2} \right) . 
$

Then conditioned on $\mathcal{E}$, $
  \frac{K_{\rm MSNN}(\hat{A}_{ij}^{(d)} - A_{ij}^{(d)})}{\left[ \sum_{k=1}^{K_{\rm MSNN}} (\tilde{\sigma}^{(k)} (d))^2 \right]^{1/2}} \overset{d}{\to} \mathcal{N}(0,1) . 
$

\end{theorem}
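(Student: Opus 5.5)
The plan is to reduce Theorem \ref{theorem: consistency} to the asymptotic normality result for SNN (Theorem 3 of \citet{agarwal2023causal}), applied to the rescaled data, exactly as Theorem \ref{theorem: finite-sample bound} transfers the SNN finite-sample bound. With $w(b,d(b)) = 1/f(d(b))$, Assumptions \ref{assumption: additional, bounded expected potential outcomes} and \ref{assumption: additional, subgaussian noise} imply that every entry of $\boldsymbol{S}_w^{(k)}(d)$ and $q_w^{(k)}(d)$ has mean in $[-1,1]$ and sub-gaussian noise with parameter at most $C\sigma$. Likewise $x^{(k)}(d)/f(d)$ satisfies the same normalization. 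Conditionally on $\mathcal{E}$ and $\boldsymbol{D}$, these noises are independent and mean zero by Assumption \ref{assumption: selection on latent factors}. The $\mathrm{MAR}^{(k)}(d)$ are disjoint and $x^{(k)}$ lies in column $j$, which is excluded from $\mathrm{MAC}$. Hence each subgroup's $(x^{(k)}, \boldsymbol{S}_w^{(k)}, q_w^{(k)})$ forms a valid SNN triple on normalized data. The triples are independent across $k$ in their $x$-noise, and the SNN proof only requires independence of $x^{(k)}$ from the $\boldsymbol{S}$ and $q$ used to fit $\hat\beta^{(k)}$.

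The key structural point is identification. I will check that $\mathbb{E}[q_w^{(k)}] = \mathbb{E}[\boldsymbol{S}_w^{(k)}]^\top \beta^{(k)}$. Column $b$ of $\boldsymbol{S}_w$ carries treatment $d(b)$, the same as $q_b$. By Lemma \ref{lemma: coefficient beta is irrelevant to treatment}, $u_i=\sum_l \beta_l u_l$, so $\langle u_i, v_b^{(d(b))}\rangle/f(d(b)) = \sum_l \beta_l \langle u_l, v_b^{(d(b))}\rangle / f(d(b))$, column by column, so the common weight cancels. Meanwhile $\mathbb{E}[x^{(k)}]^\top\beta^{(k)} = A_{ij}^{(d)}$ by Theorem \ref{theorem: identification}. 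Assumption \ref{assumption: additional, subspace inclusion} then lets $\beta^{(k)}$ be replaced by its projection $\tilde\beta^{(k)}$. Assumption \ref{assumption: additional, well balanced spectra} supplies the spectral conditions SNN needs.

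With this in place, I decompose
\[
\hat{A}_{ij,k}(d) - A_{ij}^{(d)} = \langle x^{(k)} - \mathbb{E}x^{(k)}, \tilde\beta^{(k)}\rangle + \langle x^{(k)}, \hat\beta^{(k)} - \tilde\beta^{(k)}\rangle.
\]
The second term uses the mean of $x^{(k)}$ inside it, up to a cross term. Averaging over $k$ and multiplying by $K_{\rm MSNN}$, the first sum is a sum of independent mean-zero terms with total variance $\sum_k (\tilde\sigma^{(k)})^2$, since each is a linear combination of $\epsilon_{lj}^{(d)}$ over disjoint rows. A Lindeberg–Feller CLT yields $\mathcal{N}(0,1)$ after normalization. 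The Lindeberg condition follows from sub-gaussianity together with (i), and condition (iii) ensures that no single summand dominates, as in SNN. The remainder is $f(d)\cdot O_p(\text{error})$ by the per-group bounds established for Theorem \ref{theorem: finite-sample bound}. Conditions (ii)–(iii) make those per-group bounds valid asymptotically, and (iv) makes the remainder $o_p$ of the normalizer. Slutsky's theorem then concludes the proof.

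The main obstacle is the remainder term, specifically the cross term $\langle x^{(k)} - \mathbb{E}x^{(k)}, \hat\beta^{(k)} - \tilde\beta^{(k)}\rangle$. It must be shown negligible uniformly across $K_{\rm MSNN}\to\infty$ groups, and $\hat\beta^{(k)}$ now depends on heterogeneously scaled, mixed-treatment columns. I plan to handle it as SNN does: condition on $(\boldsymbol{S}_w^{(k)}, q_w^{(k)})$, which are independent of the $x$-noise, bound it by $\sigma\|\hat\beta^{(k)}-\tilde\beta^{(k)}\|_2$ via sub-gaussian concentration, and use the SNN bound on $\|\hat\beta-\tilde\beta\|_2$. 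The normalization by $w$ makes that bound identical in form to the homogeneous case. Summing over $k$ requires a union bound, which the growth restriction $K_{\rm MSNN}=o(\cdot)$ from Theorem \ref{theorem: finite-sample bound} accommodates.
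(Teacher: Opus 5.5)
Your reduction matches the paper's. With $w(b,d(b))=1/f(d(b))$, the blocks $\boldsymbol{S}_w^{(k)}(d)$ and $q_w^{(k)}(d)$ are already normalized and $x^{(k)}(d)=f(d)\,x^{(k)\prime}$. Your column-by-column identification check (shared $u_l$, column factor $v_b^{(d(b))}/f(d(b))$) is exactly what the paper encodes in its surrogate model $\tilde{\boldsymbol{Y}}^{\prime\prime}=\boldsymbol{U}\boldsymbol{V}^{\prime\prime\top}+\boldsymbol{E}^{\prime\prime}$. There, each column $b\in\mathrm{MAC}^{(k)}(d)$ carries layer $d(b)$, and every other column, in particular $j$, carries layer $d$.

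The paper then observes that $\tilde{\boldsymbol{Y}}^{\prime\prime}$ coincides with $\tilde{\boldsymbol{Y}}\odot\boldsymbol{F}$ on every entry the algorithm uses, and cites Theorem 3 of \citet{agarwal2023causal} verbatim. It undoes the scaling through $\hat A_{ij,k}^{(d)}=f(d)\hat A_{ij,k}^{(d)\prime}$, which is where the $f(d)$ in (iv) comes from. For such a black-box citation you need all $K_{\rm MSNN}$ triples to live in one factor model, not merely each triple separately. You should state that this holds because $d(b)=D_{ib}$ depends only on $b$ and $j\notin\mathrm{MAC}^{(k)}(d)$, so no column is ever asked to carry two layers.

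Where you go beyond the paper is in re-deriving the SNN central limit argument yourself, and the one step you justify on your own does not hold as stated. The summands $Z_k=\langle x^{(k)}-\mathbb{E}x^{(k)},\tilde\beta^{(k)}\rangle$ are independent with $\|Z_k\|_{\psi_2}\lesssim\tilde\sigma^{(k)}(d)$. The Lindeberg condition for them is therefore equivalent to $\max_k\tilde\sigma^{(k)}(d)\big/\bigl[\sum_k(\tilde\sigma^{(k)}(d))^2\bigr]^{1/2}\to0$. Sub-gaussianity plus $K_{\rm MSNN}\to\infty$ does not give this. Condition (iii) cannot give it either: it is a within-group rate condition relating $\|\tilde\beta^{(k)}\|_1$ to that group's own anchor sizes. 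Its role in SNN is to make $\|\hat\beta^{(k)}-\tilde\beta^{(k)}\|_2$ small, i.e.\ to control the cross term you single out, and it says nothing about how $\tilde\sigma^{(k)}(d)$ compares across groups.

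So you have two options:
\begin{itemize}
\item add or derive such a negligibility condition, possibly at the level of individual rows $l\in\bigcup_k\mathrm{MAR}^{(k)}(d)$; or
\item drop the hand-made CLT and invoke SNN's Theorem 3 on the glued surrogate model, as the paper does, inheriting its treatment of this point.
\end{itemize}
The rest of your bookkeeping is sound: independence of the $x$-noise across disjoint $\mathrm{MAR}^{(k)}(d)$ and from $(\boldsymbol{S}_w^{(k)}(d),q_w^{(k)}(d))$, and per-group remainder bounds combined by a union bound under the growth restriction on $K_{\rm MSNN}$.
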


Proof of Theorem \ref{theorem: finite-sample bound} and \ref{theorem: consistency} are provided in Appendix \ref{proof: theorem: finite-sample bound and consistency}. 

\begin{remark} The above two Theorems have the same form with the finite-sample bound and asymptotic normality of SNN, by replacing $|\rm MAR|$, $|\rm MAC|$ with $|\rm AR|$, $|\rm AC|$. 

\end{remark}


\subsection{Sample efficiency for large matrix}\label{subsec: sample efficiency}

To exhibit the efficacy of MSNN, we illustrate that our new Algorithm \ref{algorithm: MSNN} surpasses Algorithm \ref{algorithm: SNN} under the following regime: given the distribution of treatment assignment $\boldsymbol{D}$, we compare the expected number of subgroups we can extract at most from observed outcome $\tilde{\boldsymbol{Y}}$ for MSNN and SNN: $\mathbb{E} [K_{\rm MSNN}]$ and $\mathbb{E} [K_{\rm SNN}]$. We expect that under data-scarce treatment levels, $\mathbb{E} [K_{\rm MSNN}] \gg \mathbb{E} [K_{\rm SNN}]$. 


We analyze $\mathbb{E} \left[K_{(\cdot)}\right]$ mainly because of the following reasons from both theoretical and empirical perspectives: 

\begin{enumerate}
  \item Based on previous section, under the conditions of Theorem \ref{theorem: consistency} (Asymptotic Normality), the error term is of order $K_{(\cdot)}^{-1/2}$. For large matrix size $m,n \gg 1$, this section now presents theoretical guarantee that the estimation error is smaller at the rate of order $\sqrt{\frac{\mathbb{E} [K_{\rm SNN}]}{\mathbb{E} [K_{\rm MSNN}]}}$. 
    
  \item It is computationally hard to find the maximum number of all-$1$ submatrix from a $0-1$ valued matrix (Algorithm \ref{algorithm: MixedAnchorSubMatrix} addresses this by providing $K_{(\cdot)}$ feasible submatrices without optimal guarantee, where $K_{(\cdot)}$ is a tunable hyperparameter), especially when the shape of submatrices are large. Even for Algorithm \ref{algorithm: MixedAnchorSubMatrix}, it is computationally almost unaffordable to find when $|\rm MAR|$, $|\rm MAC|$ exceed $10$. This motivates us to consider another important problem: can we find a feasible Anchor Matrix to estimate under data-scarce treatment levels Subject to the identifiability requirement that the (Mixed) Anchor Rows/Columns are no smaller than a threshold, this probability is linked to the expectation of the subgroup count: 

  $\mathbb{E} \left[ K_{(\cdot)} \right] = \sum_{k \ge 1} \mathbb{P}\left(K_{(\cdot)} \ge k\right)$ 
  
  $ \Longrightarrow \frac{\mathbb{E} \left[ K_{(\cdot)} \right]}{\max K_{(\cdot)}} \le \mathbb{P}\left(K_{(\cdot)} \ge 1\right) \le \mathbb{E} \left[ K_{(\cdot)} \right]$.     

    Generally, the improvement of $\mathbb{E} \left[ K_{(\cdot)} \right]$ leads to the improvement of $\mathbb{P}\left(K_{(\cdot)}\right)$. For the simulation study in the next section, we mainly focus on the data-scarce case that SNN fails to find any Anchor Matrix while MSNN achieves feasible estimation: although $\mathbb{E} \left[ K_{(\cdot)} \right] \ll 1$, we have $\mathbb{E} \left[ K_{(\cdot)} \right] \gtrsim 1$. 
\end{enumerate}






\subsubsection{Case study: missing completely at random}

We consider the simplest case of MCAR. Each entry $(i,j)$ is revealed i.i.d. with the following distribution: 

$\begin{cases}
    \mathbb{P} (D_{ij} = d) = p_d,\,\forall d \in \mathcal{L} , \\
    \mathbb{P} (D_{ij} = 0) = 1 - \sum_{d^\prime \in \mathcal{L}} p_{d^\prime} . 
  \end{cases}$ \label{equation: MCAR}


We require for each treatment level, $p_d > 0$, and $1 - \sum_{d^\prime \in \mathcal{L}} p_{d^\prime} \ge 0$. We denote the largest probability as $p_{\max} \coloneqq \left(\max_{d^\prime} p_{d^\prime} \right)$, and its treatment level $d_{\max} \coloneqq \argmax_d p_d$. For simplicity we denote $\gamma \coloneqq \sum_{d^\prime \in \mathcal{L}} \left( \frac{p_{d^\prime}}{p_{\max}} \right)^{r + 1}$. Now we calculate the expectation of $K_{(\cdot)}$ for both SNN and MSNN in the following theorem: 

\begin{theorem}\label{theorem: expectation of K, MCAR} Expectation of number of samples. 

Consider estimation of any entry $(i,j)$ under treatment level $d$. Suppose the treatment assignment $D$ follows missing completely at random distribution
, for fixed sample size $\mathrm{AR}(d) = \mathrm{MAR}(d) = r$ and $\mathrm{AC}(d) = \mathrm{MAC}(d) = c$, if the probabilities are efficiently small, that is for some $\alpha \in (0,1)$ the following sparsity conditions holds: 

$mr p_d p_{\max}^{\alpha c} = o(1) ,\, nc \gamma p_{\max}^{(1 - \alpha)r + 1 + \alpha} = o(1)$, 


then the expectations of $K_{(\cdot)}$ are bounded by: 

$\frac{\mathbb{E}\left[K_{\rm SNN}(d)\right]}{\binom{m-1}{r}\binom{n-1}{c} p_d^{rc + r + c}} \in \left[ 1 - o(1) , 1 \right]$, 

$\frac{\mathbb{E}\left[K_{\rm MSNN}(d)\right]}{\binom{m-1}{r}\binom{n-1}{c} \gamma^c p_d^{r} p_{\max}^{(r+1)c}} \in \left[ 1 - o(1) , 1 \right] . $


\end{theorem}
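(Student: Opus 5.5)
We work with $N_{(\cdot)}(d)$, the number of \emph{all} valid anchor blocks $(R,C)$ with $R\subseteq[m]\setminus\{i\}$, $|R|=r$, and $C\subseteq[n]\setminus\{j\}$, $|C|=c$. A block is valid when it satisfies the SNN (respectively MSNN) conditions of Section 3. The plan has three steps: compute $\mathbb{E}[N_{(\cdot)}]$ exactly, show $K_{(\cdot)}\le N_{(\cdot)}$, and show $K_{(\cdot)}$ is at least $N_{(\cdot)}$ minus the number of valid blocks that overlap another valid block.

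\textbf{Step 1: the exact first moment.} By linearity, $\mathbb{E}[N]=\binom{m-1}{r}\binom{n-1}{c}\,\mathbb{P}((R,C)\text{ valid})$. The validity event involves only the entries of $R\times C$, $R\times\{j\}$ and $\{i\}\times C$, and these are distinct and i.i.d. under MCAR.
\begin{itemize}
\item For SNN, all $rc+r+c$ entries must equal $d$, so the probability is $p_d^{rc+r+c}$.
\item For MSNN, the $r$ entries $D_{aj}$ must equal $d$, contributing $p_d^r$.
\item For MSNN, each column $b\in C$ independently needs $D_{ib}=D_{ab}=d'$ for all $a\in R$ and some common $d'\in\mathcal{L}$. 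This has probability $\sum_{d'}p_{d'}^{r+1}=\gamma p_{\max}^{r+1}$.
\item Multiplying over the $c$ columns gives $\gamma^c p_d^r p_{\max}^{(r+1)c}$, which is exactly the normalizer in the statement.
\end{itemize}

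\textbf{Step 2: the upper bound.} Any family of subgroups extracted by the algorithm is a sub-collection of valid blocks, and distinct subgroups are distinct blocks. Hence $K_{(\cdot)}\le N_{(\cdot)}$ pointwise, and the ratio is at most $1$.

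\textbf{Step 3: the lower bound.} Call a valid block \emph{isolated} if no other valid block shares a row or a column with it. The isolated blocks are pairwise disjoint, so they form an admissible family and $K\ge N-M$. Here $M$ is the number of valid blocks that overlap some other valid block. It therefore suffices to show $\mathbb{E}[M]=o(\mathbb{E}[N])$. We bound $\mathbb{E}[M]$ by the expected number of ordered pairs $\big((R,C),(R',C')\big)$ of distinct, overlapping, valid blocks, and stratify by the overlap $s=|R\cap R'|$ and $t=|C\cap C'|$ with $(s,t)\ne(r,c)$. Conditionally on $(R,C)$ being valid, we bound the probability that $(R',C')$ is valid using only its new entries:
\begin{itemize}
\item each new row $a\in R'\setminus R$ contributes a factor $p_d$ from $D_{aj}=d$;
\item each new entry in an old column is forced to equal the already-fixed column treatment, contributing at most $p_{\max}$;
\item each new column contributes $\gamma p_{\max}^{r+1}$ with its new row entries included, or at most $\gamma p_{\max}^{r-s+1}$-type factors when some rows are shared.
\end{itemize}
The number of choices of $(R',C')$ with given $(s,t)$ is at most $\binom{r}{s}\binom{c}{t}m^{r-s}n^{c-t}$. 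Each term then factorizes into per-row factors of the form $m\,p_d\,p_{\max}^{(\text{\#columns})}$ and per-column factors of the form $n\,\gamma\,p_{\max}^{(\text{\#rows})+1}$, times combinatorial constants depending only on $r,c$. The SNN case is identical, with $p_{\max}$ and $\gamma$ replaced by $p_d$ and $1$; since $p_d\le p_{\max}$ and $\gamma\ge1$, the same hypotheses control it.

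\textbf{Main obstacle.} The key step is to show every $(s,t)$ term is $o(1)$ relative to $\mathbb{E}[N]$ using only the two sparsity conditions. Each new row or column is paired with a product of $p_{\max}$ factors whose number of exponents depends on the overlap, so a per-row or per-column bound alone does not suffice. The plan is to use $\alpha$ as an interpolation. We charge each new row at least $p_{\max}^{\alpha c}$ worth of cross entries, matching the first condition $mr\,p_d\,p_{\max}^{\alpha c}=o(1)$. We then charge each new column the remaining $p_{\max}^{(1-\alpha)r+1+\alpha}$, matching the second condition $nc\,\gamma\,p_{\max}^{(1-\alpha)r+1+\alpha}=o(1)$. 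The shape of this budget ($\alpha c$ per row, $(1-\alpha)r+1+\alpha$ per column) is read off the hypotheses. The part I expect to be delicate is verifying that the count of new cross entries, namely $(r-s)c+s(c-t)$, together with the $D_{ib}$ entries, always supports such an allocation. When $s=r$ or $t=c$ at least one new row or column must appear, so each nonzero term carries at least one $o(1)$ factor. Summing the finitely many $(s,t)$ terms, with $r,c$ fixed, gives $\mathbb{E}[M]\le o(1)\,\mathbb{E}[N]$, which yields the ratio $1-o(1)$.
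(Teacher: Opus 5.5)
Your first-moment computation, the upper bound $K\le N$, and the deletion bound $K\ge N-M$ are all fine. The deletion bound is a legitimate elementary substitute for the paper's Caro--Wei/Jensen/Cauchy--Schwarz step: it yields $1-2\mathbb{E}[K^p]/\mathbb{E}[K']$ in place of $(1+2\mathbb{E}[K^p]/\mathbb{E}[K'])^{-1}$, and these are equivalent once the ratio is $o(1)$.

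The gap is in what you count as a conflict. Admissibility (Algorithms~\ref{algorithm: SNN} and~\ref{algorithm: MSNN}) only requires the anchor \emph{row} sets $\mathrm{(M)AR}^{(k)}(d)$ to be mutually disjoint; columns may be reused. You call a block non-isolated when it shares a row \emph{or a column} with another valid block. That forces you to control the strata $s=|R\cap R'|=0$, $t=|C\cap C'|\ge1$, and the sparsity hypotheses do not control them. With $r'=r-s$ new rows and $c'=c-t$ new columns, the pair term is, up to constants depending on $r,c$,
\[
(m p_d p_{\max}^{t})^{r'}(n\gamma p_{\max}^{r+1})^{c'}
= r^{-r'}c^{-c'}\,(mr\,p_d\,p_{\max}^{\alpha c})^{r'}\,(nc\,\gamma\, p_{\max}^{(1-\alpha)r+1+\alpha})^{c'}\,p_{\max}^{\alpha(s-1)(c-t)+(1-\alpha)(r-s)t}.
\]
The leftover exponent is guaranteed nonnegative only when $s\ge1$. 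For $s=0$ it equals $(1-\alpha)rt-\alpha(c-t)$, which can be negative, so ``each term carries at least one $o(1)$ factor'' does not suffice.

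Here is a concrete failure. Take one treatment level ($p_d=p_{\max}=p$, $\gamma=1$), $r=1$, $c=3$, $\alpha=0.99$, $m=p^{-3.9}$, $n=p^{-1.9}$, and let $p\to0$.
\begin{itemize}
\item Both hypotheses hold: $mp^{3.97}\to0$ and $3np^{2}\to0$.
\item $\mathbb{E}[N]\asymp mn^3p^7=p^{-2.6}$.
\item A valid block has, in expectation, $\asymp mn^2p^6=p^{-1.7}\to\infty$ valid partners sharing one column and no row.
\end{itemize}
So almost no valid block is isolated in your sense, and $\mathbb{E}[M]=o(\mathbb{E}[N])$ fails. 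If column-disjointness were genuinely required, the theorem itself would be false here, since then $K\le(n-1)/3\ll\mathbb{E}[N]$.

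The repair is to define isolation through shared rows only and sum over $s\ge1$, $(s,t)\ne(r,c)$. The display then bounds every term by positive powers of the two hypothesis quantities. This is exactly what the paper's Lemma~\ref{lemma: summation of combinatorial numbers, MCAR} does; its constraint $r'\le r-1$ is precisely $s\ge1$.

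You also need the correct factor for new columns. A column $b\notin C$ is unconstrained by the validity of $(R,C)$, including its entries in the shared rows. It therefore contributes exactly $\sum_{d'}p_{d'}^{r+1}=\gamma p_{\max}^{r+1}$. Your fallback ``$\gamma p_{\max}^{r-s+1}$-type factors'' would leave, at $s=r$, a factor $(n\gamma p_{\max})^{c-t}$ that the hypotheses do not bound.
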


Proof is presented in Appendix \ref{proof: theorem: expectation of K, MCAR}. 

\begin{remark} The sparsity condition above 
does not conflicts with $\mathbb{E}[K_{\rm MSNN}] = \omega(1)$. For example, consider 
$r = o(m)$, $c = o(n)$, then $\mathbb{E}\left[K_{\rm MSNN}\right] = \Theta\left( \frac{m^r n^c}{r!c!} \gamma^c p_d^{r} p_{\max}^{(r+1)c} \right) = o\left( \frac{p_{\max}^{- \alpha c}}{r! r^r c! c^c} \right)$, 
which does not conflict with $\mathbb{E}[K_{\rm MSNN}] = \omega(1)$ for fixed $r,c$. 

\end{remark}

Based on the above theorem we analyze the sample efficiency of MSNN quantitatively: 

\begin{corollary}\label{corollary: data efficiency, comparison to SNN, MCAR} 

Estimation efficiency of MSNN compared to trivial SNN. 
Under the setting of Theorem \ref{theorem: expectation of K, MCAR}, 
for estimation of entry $(i,j)$ at treatment level $d$ we have

  $\frac{\mathbb{E}\left[K_{\rm MSNN}(d)\right]}{\mathbb{E}\left[K_{\rm SNN}(d)\right]} = (1 + o(1)) \left[ \sum_{d^\prime \in \mathcal{L}} \left( \frac{p_{d^\prime}}{p_d} \right)^{r + 1} \right]^{c}$ . 


\end{corollary}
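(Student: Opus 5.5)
The corollary should follow by dividing the two asymptotic expressions in Theorem \ref{theorem: expectation of K, MCAR}. That theorem gives, under the same sparsity conditions,
\[
\mathbb{E}\left[K_{\rm SNN}(d)\right] = (1 - o(1)) \binom{m-1}{r}\binom{n-1}{c} p_d^{rc+r+c},
\]
\[
\mathbb{E}\left[K_{\rm MSNN}(d)\right] = (1 - o(1)) \binom{m-1}{r}\binom{n-1}{c} \gamma^c p_d^{r} p_{\max}^{(r+1)c}.
\]
Here each ratio lies in $[1-o(1),1]$. The quotient of two such factors is $1+o(1)$, since $(1-o(1))/(1-o(1))$ and $1/(1-o(1))$ are both $1+o(1)$. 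The binomial coefficients cancel, so the plan reduces to simplifying
\[
\frac{\gamma^c p_d^{r} p_{\max}^{(r+1)c}}{p_d^{rc+r+c}} = \gamma^c \frac{p_{\max}^{(r+1)c}}{p_d^{(r+1)c}} .
\]

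Next I substitute $\gamma = \sum_{d'} (p_{d'}/p_{\max})^{r+1}$, so that
\[
\gamma^c\, p_{\max}^{(r+1)c} = \Big[\sum_{d'} p_{d'}^{r+1}\Big]^c .
\]
Dividing by $p_d^{(r+1)c}$ then gives exactly $\big[\sum_{d'} (p_{d'}/p_d)^{r+1}\big]^c$. The factor $p_{\max}$ cancels completely, which is reassuring: the normalization by $d_{\max}$ used in defining $\gamma$ is only bookkeeping inside the sparsity conditions.

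I expect no real obstacle here, since everything is already contained in Theorem \ref{theorem: expectation of K, MCAR}. The only care needed is that the $o(1)$ terms are uniform under the stated sparsity regime, so the ratio of the two $(1-o(1))$ factors is legitimately $1+o(1)$. Also, $\mathbb{E}[K_{\rm SNN}(d)]$ is strictly positive because $p_d>0$, so the division is well defined.

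As a sanity check, I would also note that the ratio is at least $1$, since the $d'=d$ term alone contributes $1$. It is exponentially large in $c$ when some $p_{d'} \gg p_d$, which matches the claimed efficiency gain for data-scarce levels.
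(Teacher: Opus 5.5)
Your proposal is correct and matches the paper's proof: the paper also divides the two $(1-o(1))$-asymptotics from Theorem~\ref{theorem: expectation of K, MCAR}, the binomial factors cancel, and $\gamma^c p_{\max}^{(r+1)c} = \bigl[\sum_{d'} p_{d'}^{r+1}\bigr]^c$ yields the stated ratio. You also spell out why the quotient of two $[1-o(1),1]$ factors is $1+o(1)$, a step the paper leaves implicit.
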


\begin{table*}[t]
\centering
\caption{Performance comparison of MSNN and SNN on MCAR data. }
\label{tab: mcar_comparison}
\resizebox{\textwidth}{!}{
\begin{tabular}{l c c c c c c}
\toprule
\multirow{2}{*}{\textbf{Algorithm}} &
\multicolumn{2}{c}{\textbf{Low} ($p(d)=0.01$)} &
\multicolumn{2}{c}{\textbf{Medium} ($p(d)=0.025$)} &
\multicolumn{2}{c}{\textbf{High} ($p(d)=0.05$)} \\
\cmidrule(lr){2-3} \cmidrule(lr){4-5} \cmidrule(lr){6-7}
 & FR \% ($\uparrow$) & MRE ($\downarrow$) & FR \% ($\uparrow$) & MRE ($\downarrow$) & FR \% ($\uparrow$) & MRE ($\downarrow$) \\
\midrule
SNN & $0.03 \pm 0.00$ & $0.806 \pm 0.240$ & $1.20 \pm 0.36$ & $0.577 \pm 0.110$ & $11.34 \pm 0.82$ & $0.515 \pm 0.046$ \\
MSNN (ours) & $\bf 4.69 \pm 1.11$ & $\bf 3.91 \pm 1.09 \;\times 10^{-2}$ & $\bf 63.73 \pm 5.67$ & $\bf 1.18 \pm 0.33 \;\times 10^{-3}$ & $\bf 99.29 \pm 0.81$ & $\bf 7.05 \pm 0.21 \;\times 10^{-4} $ \\
\bottomrule
\end{tabular}
}

\hfill

\centering
\caption{Performance comparison of MSNN and SNN on MNAR data, $\lambda=0.05$. }
\label{tab: mnar_comparison, 0.05}
\resizebox{\textwidth}{!}{
\begin{tabular}{l c c c c c c}
\toprule
\multirow{2}{*}{\textbf{Algorithm}} &
\multicolumn{2}{c}{\textbf{Low} (Proportion $=(1.30 \pm 0.06)\%$)} &
\multicolumn{2}{c}{\textbf{Medium} (Proportion $=(1.49 \pm 0.06)\%$)} &
\multicolumn{2}{c}{\textbf{High} (Proportion $=(2.50 \pm 0.10)\%$)} \\
\cmidrule(lr){2-3} \cmidrule(lr){4-5} \cmidrule(lr){6-7}
 & FR \% ($\uparrow$) & MRE ($\downarrow$) & FR \% ($\uparrow$) & MRE ($\downarrow$) & FR \% ($\uparrow$) & MRE ($\downarrow$) \\
\midrule
SNN & $0.19 \pm 0.07$ & $0.349 \pm 0.139$ & $0.38 \pm 0.12$ & $0.390 \pm 0.143$ & $4.17 \pm 0.84$ & $0.351 \pm 0.050$ \\
MSNN (ours) & $\bf 3.13 \pm 0.41$ & $\bf 0.117 \pm 0.006$ & $\bf 3.26 \pm 0.34$ & $\bf 0.114 \pm 0.007$ & $\bf 4.52 \pm 0.62$ & $\bf 0.106 \pm 0.004$ \\
\bottomrule
\end{tabular}
}

\hfill

\centering
\caption{Performance comparison of MSNN and SNN on MNAR data, $\lambda=0.02$. }
\label{tab: mnar_comparison, 0.02}
\resizebox{\textwidth}{!}{
\begin{tabular}{l c c c c c c}
\toprule
\multirow{2}{*}{\textbf{Algorithm}} &
\multicolumn{2}{c}{\textbf{Low} (Proportion $=(3.54 \pm 0.12)\%$)} &
\multicolumn{2}{c}{\textbf{Medium} (Proportion $=(3.76 \pm 0.12)\%$)} &
\multicolumn{2}{c}{\textbf{High} (Proportion $=(4.59 \pm 0.11)\%$)} \\
\cmidrule(lr){2-3} \cmidrule(lr){4-5} \cmidrule(lr){6-7}
 & FR \% ($\uparrow$) & MRE ($\downarrow$) & FR \% ($\uparrow$) & MRE ($\downarrow$) & FR \% ($\uparrow$) & MRE ($\downarrow$) \\
\midrule
SNN & $9.57 \pm 0.85$ & $0.366 \pm 0.027$ & $11.70 \pm 1.62$ & $0.379 \pm 0.037$ & $22.66 \pm 2.24$ & $0.383 \pm 0.025$ \\
MSNN (ours) & $\bf 26.96 \pm 3.50$ & $\bf 0.129 \pm 0.008$ & $\bf 33.88 \pm 4.20$ & $\bf 0.135 \pm 0.010$ & $\bf 54.16 \pm 4.30$ & $\bf 0.118 \pm 0.009 $ \\
\bottomrule
\end{tabular}
}
\end{table*}

\begin{corollary}\label{corollary: efficiency compared to the treatment level with the most data, MCAR}

Estimation efficiency compared to the treatment level with the most data. 
Under the setting of Theorem \ref{theorem: expectation of K, MCAR}, 
for estimation of entry $(i,j)$ at treatment level $d$ compared the level with most data $d_{\max}$, 

$\frac{\mathbb{E}\left[K_{\rm SNN}(d)\right]}{\mathbb{E}\left[K_{\rm MSNN}(d_{\max})\right]} = O \left( \gamma^{-c} \left( \frac{p_d}{p_{\max}} \right)^{rc+r+c} \right)$, 

$\frac{\mathbb{E}\left[K_{\rm MSNN}(d)\right]}{\mathbb{E}\left[K_{\rm MSNN}(d_{\max})\right]} = O\left( \left( \frac{p_d}{p_{\max}} \right)^{r} \right)$. 




\end{corollary}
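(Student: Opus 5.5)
Both ratios follow directly from the two-sided asymptotics of Theorem \ref{theorem: expectation of K, MCAR}, applied once at level $d$ and once at level $d_{\max}$. The combinatorial prefactor $\binom{m-1}{r}\binom{n-1}{c}$ is the same in every expression, so it cancels. The factors $1-o(1)$ and $1$ then turn each ratio into an $O(\cdot)$ bound.

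\textbf{Step 1 (sparsity conditions at $d_{\max}$).} Theorem \ref{theorem: expectation of K, MCAR} is assumed to hold at level $d$, but we also need it at level $d_{\max}$. The condition $nc\gamma p_{\max}^{(1-\alpha)r+1+\alpha}=o(1)$ does not depend on the target level at all. The condition $mrp_{d_{\max}}p_{\max}^{\alpha c}=o(1)$ does depend on the level, and it is stronger than its analogue at $d$ because $p_{d_{\max}}=p_{\max}\ge p_d$. This is the one real obstacle. I would treat ``the setting of Theorem \ref{theorem: expectation of K, MCAR}'' as including the conditions at $d_{\max}$, since $d$ is arbitrary in the theorem. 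Alternatively, only the upper bound $\mathbb{E}[K]\le\binom{m-1}{r}\binom{n-1}{c}(\cdots)$ is needed in the numerators, and that bound holds unconditionally, since it is presumably just the union/first-moment count. So the only place the sparsity condition at $d_{\max}$ is really used is the lower bound $(1-o(1))$ on the denominator $\mathbb{E}[K_{\rm MSNN}(d_{\max})]$.

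\textbf{Step 2 (first ratio).} Setting $d=d_{\max}$ in Theorem \ref{theorem: expectation of K, MCAR} gives
\[
\mathbb{E}[K_{\rm MSNN}(d_{\max})]\ge(1-o(1))\binom{m-1}{r}\binom{n-1}{c}\gamma^c p_{\max}^{r}p_{\max}^{(r+1)c}.
\]
The theorem at level $d$ gives
\[
\mathbb{E}[K_{\rm SNN}(d)]\le\binom{m-1}{r}\binom{n-1}{c}p_d^{rc+r+c}.
\]
Dividing the second by the first yields
\[
\frac{\mathbb{E}[K_{\rm SNN}(d)]}{\mathbb{E}[K_{\rm MSNN}(d_{\max})]}\le(1+o(1))\,\gamma^{-c}\,\frac{p_d^{rc+r+c}}{p_{\max}^{rc+r+c}},
\]
which is the first claimed bound.

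\textbf{Step 3 (second ratio).} The theorem at level $d$ gives
\[
\mathbb{E}[K_{\rm MSNN}(d)]\le\binom{m-1}{r}\binom{n-1}{c}\gamma^c p_d^{r}p_{\max}^{(r+1)c}.
\]
Divide by the same lower bound on $\mathbb{E}[K_{\rm MSNN}(d_{\max})]$ from Step 2. The factors $\gamma^c p_{\max}^{(r+1)c}$ cancel, leaving $(1+o(1))(p_d/p_{\max})^r=O\left((p_d/p_{\max})^r\right)$. Note that $\gamma$ is defined relative to $p_{\max}$ and is the same quantity for both levels, which is what makes this cancellation exact.
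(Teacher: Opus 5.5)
Your proof is correct. It uses the same basic move as the paper, dividing the bounds of Theorem~\ref{theorem: expectation of K, MCAR}, but you orient the inequalities differently, and that difference matters.

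The paper combines the $(1-o(1))$ asymptotics for $\mathbb{E}[K_{\rm SNN}(d)]$ and $\mathbb{E}[K_{\rm MSNN}(d)]$ with only the unconditional first-moment bound $\mathbb{E}[K_{\rm MSNN}(d_{\max})]\le\binom{m-1}{r}\binom{n-1}{c}\gamma^c p_{\max}^{rc+r+c}$. Read literally, dividing a lower bound on the numerator by an upper bound on the denominator bounds the ratios from \emph{below*}. That gives $\Omega(\cdot)$ statements at the stated rates, and it needs no sparsity hypothesis at $d_{\max}$. Your pairing puts first-moment upper bounds in the numerators and the theorem's lower bound at $d_{\max}$ in the denominator. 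That is the orientation that actually yields the $O(\cdot)$ bounds as written.

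The price is the extra hypothesis $mr\,p_{\max}^{1+\alpha c}=o(1)$, which you correctly isolate. It is not a technicality. Since $K_{\rm MSNN}(d_{\max})\le (m-1)/r$ deterministically, the first-moment count at $d_{\max}$ can far exceed $m/r$ while the sparsity conditions at $d$ still hold, which is possible when $p_d$ is tiny. In that case $\mathbb{E}[K_{\rm MSNN}(d)]/\mathbb{E}[K_{\rm MSNN}(d_{\max})]\gg (p_d/p_{\max})^r$, so the $O$ claim fails without some condition at $d_{\max}$.

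In short, the paper's route gives a hypothesis-free lower bound. That lower bound is what the subsequent remark about narrowing the gap between $d$ and $d_{\max}$ really uses. Your route gives the stated upper bound under the strengthened sparsity condition. Together they give $\Theta(\cdot)$ in that regime.
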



Proof are presented in Appendix \ref{proof: corollary: data efficiency, comparison to SNN, MCAR}. 

\begin{remark}

Corollary \ref{corollary: data efficiency, comparison to SNN, MCAR} and \ref{corollary: efficiency compared to the treatment level with the most data, MCAR} together guarantee efficient estimation under data heterogeneity. For a treatment level with sparse data, Corollary \ref{corollary: data efficiency, comparison to SNN, MCAR} shows that the expected number of samples $K_{\rm MSNN}$ increases by a factor of at least the power of $rc$ compared to $K_{\rm SNN}$, which is an exponential improvement, while Corollary \ref{corollary: efficiency compared to the treatment level with the most data, MCAR} shows that the relative efficiency gap between the data-sparsest and the data-richest levels is efficiently reduced - specifically, its dependence on the proportion $p_d / p_{d_{\max}}$ is reduced from a quadratic order $rc$ to a linear order $r$. 

\end{remark}

\section{Experiments}\label{sec: experiments}

This section exhibits the efficacy of MSNN on both synthetic data and real-world data. Experimental details including hyperparamters and real-world experiment background are provided in Appendix \ref{sec: experimental details}. 

\begin{figure*}[h]
\centering
\begin{subfigure}{\linewidth}
    \includegraphics[width=0.33\linewidth]{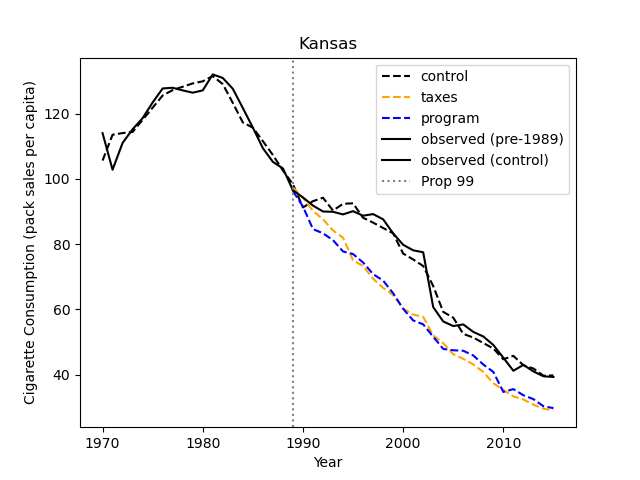} 
    \includegraphics[width=0.33\linewidth]{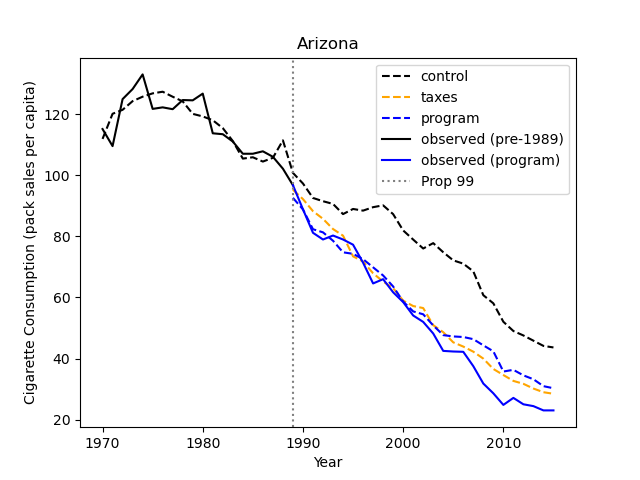} 
    \includegraphics[width=0.33\linewidth]{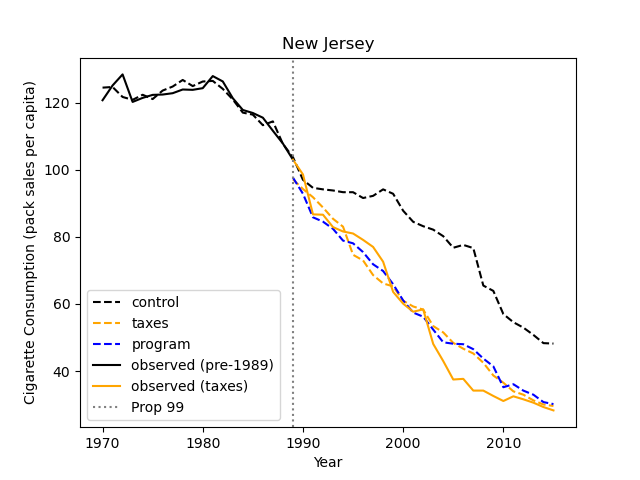} 
\end{subfigure}
\caption{Selected prediction results of MSNN on Proposition 99 study in \citet{abadie2010synthetic}. The three states Kansas, Arizona, New Jersey belong to treatment group of control, program, taxes respectively. The dashed lines are estimation results, while the solid lines indicates real-world observation. Before the year of 1989 (illustrated by the vertical dotted gray lines) all states are in control group so the solid lines are black-colored, after which their color varies. The dotted line indicates the time of Proposition 99 assignment. The dashed lines and solid lines of same color at the same time periods are close to each other, indicating successful validation and thus the correctness of applying our model on this real-world dataset. }
\label{fig: case study}
\end{figure*}

\subsection{Simulation Study}\label{subsec: simulation}

Our theoretical analysis characterizes the expected number of valid anchors. In practice, enumerating multiple anchors is computationally expensive for large matrices; we therefore focus on the practically relevant regime $K_{(\cdot)}=1$, and evaluate the feasibility probability $\mathbb{P}(K_{(\cdot)} \ge 1)$ predicted by our theory along with the prediction error. 

\textbf{Experimental Setup. }Under our multi-treatment framework, we randomly generate latent factors $\boldsymbol{U} \in \mathbb{R}^{m \times r}$ and $\boldsymbol{V}^{(d)} \in \mathbb{R}^{n \times r}$, here $\boldsymbol{U}$ is normalized while $\boldsymbol{V}^{(d)}$ is scaled by a factor $f(d)$. For the treatment assignment, we focus on two settings: (i) MCAR, each entry is revealed at treatment level $d$ with $\mathbb{P}(D_{ij} = d) = p_{\rm MCAR}(d)$; (ii) MNAR, each entry at $(i,j)$ is revealed with $\mathbb{P}(D_{ij} = d) = p_{\rm MNAR}\left(A_{ij}^{d^\prime \in \mathcal{L}} , d \right)$. For (ii) we take the multi-variant softmax function $p_{\rm MNAR}\left(A_{ij}^{d^\prime \in \mathcal{L}} , d \right) = \frac{e^{\lambda A_{ij}^{d}}}{\sum_{d^\prime \ne 0} e^{\lambda A_{ij}^{d^\prime}}}$ ($\lambda$ is a chosen hyper-parameter), generalizing the standard logistic function $\sigma(x) = \frac{e^{x}}{1 + e^{x}}$ discussed by \citet{ma2019missing}. 

Since the data that MSNN can use strictly covers all data that SNN can leverage and thus MSNN outperforms SNN, we mainly focus on cases that SNN fails, that is estimating data-scarce treatment levels. 
Following the implementation of \citet{agarwal2023causal}, we filter out the invalid estimations by testing whether the (Mixed) Anchor Rows/Columns are feasible: approximate satisfaction of $x^{(k)}(d) \in \col(S^{(k)}(d))$ (corresponding to Assumption \ref{assumption: additional, subspace inclusion}) and $q^{(k)}(d) \in \row(S^{(k)}(d))$ (corresponding to Assumption \ref{assumption: linear span inclusion on latent row factors}). We also filter out the estimations which (Mixed) Anchor Matrix is of size $(1,1)$. 
We report both the feasible rate $\text{FR} \coloneqq \frac{\# \text{feasible estimated entries}}{mn}$ and mean relative estimation error $\text{MRE} \coloneqq \frac{\sum_{\hat{A}_{ij}^{(d)} \text{ is feasible}} \left| {(\hat{A}_{ij}^{(d)} - A_{ij}^{(d)})/A_{ij}^{(d)}} \right|}{\# \text{feasible estimated entries}}$ conditioned on feasibility. 


\textbf{Results. }We calculate the feasible ratio (FR) and mean relative error (MRE) of SNN and MSNN for all entries under data-scarce treatment levels ($d=$low, medium, high). Results for MCAR and MNAR settings ($\lambda=0.05$ and $\lambda=0.02$) are reported in Tables \ref{tab: mcar_comparison}, \ref{tab: mnar_comparison, 0.05}, and \ref{tab: mnar_comparison, 0.02}. 

Across all settings, MSNN consistently achieves substantially higher feasible ratios than SNN, often nearly doubling the proportion of entries for which at least one valid anchor is found. Correspondingly, MSNN achieves lower estimation error, with MRE reduced by a factor of two to three for all treatment levels. This improvement arises from two complementary effects: MSNN increases the probability that at least one valid anchor exists, and, conditioned on feasibility, leverages substantially larger mixed anchor row and column sets than SNN. As a result, MSNN uses a larger effective sample size for reconstruction and attains lower estimation error. 

These observations align with the theoretical sample efficiency analysis in Section \ref{subsec: sample efficiency}. While the theory characterizes the expected number of anchors, it predicts a higher probability that $K_{(\cdot)} \ge 1$ under MSNN, which is exactly reflected in the improved feasible ratio observed in practice. When the observed proportion of a treatment level falls below $2.5\%$, the data are too sparse for reliable estimation, and even MSNN can only recover a small fraction of entries. For treatment levels with observed proportions above $4\%$, MSNN operates efficiently with high feasibility and low error, whereas SNN continues to suffer from low feasible ratios and large estimation errors. This highlights that MSNN substantially enlarges the range of sparsity levels under which reliable estimation is possible.


\subsection{Case study: California Proposition 99}\label{subsec: case study}

To exhibit the effect of our method on real-world model, we revisit the classic Proposition 99 study in \citet{abadie2010synthetic}. 
Different from \citet{agarwal2020synthetic} who focuses on time-average treatment effect, we further estimate the counterfactual outcome at \textit{each specific year} for each state. 

In Figure \ref{fig: case study} we select three states covering all treatment groups. In each sub-figure the solid line indicates observed outcomes and dash lines indicate the estimated outcome of our Algorithm. The dash lines which have the same color with solid line at some years indicates validation (for example, the black dash lines before 1989), while others indicate estimated counterfactual outcomes (not observed in reality). The results are consistent with the predicted counterfactual trends until the year of 2000 in the Figure 4 of \citet{agarwal2020synthetic}, though the latter does not discuss guaranteed error bound for each single year in their theoretical framework. 



\section{Conclusions and Future work}

In this work we introduce a new entry-wise causal identification estimator named Mixed Synthetic Nearest Neighbors (MSNN), enabling cross-treatment data integration under the assumption that certain low-rank latent factors are shared across treatments. Theoretically, we prove that MSNN achieves an exponential improvement in sample efficiency for sparse treatment levels under the MCAR setting, while preserving the finite-sample error bound and asymptotic normality guarantees of the original SNN estimator. Experiments on both synthetic and real-world data further demonstrate the efficacy of MSNN under limited data. We hope our work inspires future research in causal matrix completion under multiple treatment levels. 





\newpage

\section*{Impact Statement}
This paper presents work whose goal is to advance the field of Machine
Learning. There are many potential societal consequences of our work, none
which we feel must be specifically highlighted here.


\bibliography{sample}
\bibliographystyle{icml2026}

\newpage
\appendix
\onecolumn

\section{Related Works}




Classical matrix completion assumes data are missing completely at random (MCAR) or at random (MAR). \citep{candes2012exact, keshavan2010matrix} However in many real-world applications, missingness depends on the unobserved entries themselves, i.e., missing not at random (MNAR). Existing work addresses this by jointly modeling the missingness mechanism and
low-rank structure. \citep{ma2019missing, sportisse2020imputation, yang2021tenips} In the causal
inference context, matrix completion can be used to estimate potential outcomes, as in \citet{athey2021matrix,agarwal2023causal,agarwal2023synthetic}. 

Our work studies a general MNAR matrix completion problem under multiple treatments, which is formalized as a three-dimensional tensor completion framework aiming to achieve entry-wise treatment effect identification. 
\citet{agarwal2023causal} addresses the MNAR matrix completion problem under a binary treatment setup, and we show that our result strictly reduces to theirs in the binary case. 

Other approaches in causal matrix completion under multiple treatment levels have different frameworks from  ours. \citet{agarwal2020synthetic} adopts a similar three-dimensional tensor completion framework as ours, but one of the three dimensions is interpreted as time index $t$. Consequently, their goal is to estimate the time-average expected potential outcome during the post-treatment period, rather than entry-wise treatment effects. In contrast, our framework removes this interpretation and focuses on more general matrix completion setting, aiming to construct fine-grained, entry-wise treatment identification under multiple-treatment settings. \citet{gao2025causal} also relies on time sequential data assumption different from our setting. They also utilizes a gradient descent method different from ours. 
\citet{zhang2025generalized} considers a generalized three-dimension causal tensor completion framework under MNAR, where each entry is missing independently, which is different from our assumption that for an element at most one treatment level's potential outcome can be observed (the observation on the treatment dimension are mutually exclusive). \citet{choi2025learningcounterfactualdistributionskernel} proposes KernelNN, estimating the underlying distributions rather than entry-wise value. \citet{dwivedi2024doublyrobustnearestneighbors} proposes a doubly robust nearest neighbors method under binary treatment, differs from our multiple treatment setting. 

\section{Experimental Details}\label{sec: experimental details}

This section further provides experimental details for Section \ref{sec: experiments}. 

\textbf{Hyperparameters of the simulation study.} In Section \ref{subsec: simulation}, the shape of the matrix is $m=300$, $n=100$, rank $r=3$, the relative noise scale $\sigma$ is set to $0.001$. Due to the small scale of experiments, the number of samples $K_{(\cdot)}$ is set to $1$. The treatment levels are $\mathcal{L} = \{\text{low, medium, high, very high}\}$, while the scales of treatment levels are 
$f(d) = \begin{cases}
    1 &, d = \text{low} \\
    5 &, d = \text{medium} \\
    25 &, d = \text{high} \\
    625 &, d = \text{very high} 
\end{cases}$. 

For MCAR simulation (i), the ground truth entries are within $[-f(d), f(d)]$, the probability distribution for each entry is 
$p_{\rm MCAR}(d) = \begin{cases}
    0.115 &, d = 0 \text{ (not observed)} \\
    0.01 &, d = \text{low} \\
    0.025 &, d = \text{medium} \\
    0.05 &, d = \text{high} \\
    0.8 &, d = \text{very high} \\
\end{cases}$
. 
For MNAR simulation (ii), we take the absolute value of each ground truth entry so that they are constrained within $[0, f(d)]$, the hyperparameter $\lambda$ in the probability distribution $p_{\rm MNAR}(d)$ is set to $0.05$ and $0.02$. 

For the feasible rate (FR) and mean relative estimation error (MRE), we report the mean and standard deviation on $10$ repetitions. 

\textbf{Background information of California Proposition 99 Data}

In 1988, California passed the Proposition 99 as the first large-scale anti-tobacco program, followed by other states which introduced similar programs (Arizona, Massachusetts, Oregon, and Florida) or raised tobacco taxes at least 50 cents (Alaska, Hawaii, Maryland, Michigan, New Jersey, New York, Washington). The other 38 states remains status quo, which are treated as the control group. One natural question is that what if a state remains status quo had raised tobacco taxes or introduced anti-tobacco program, or what if a state introduced anti-tobacco program had remained status quo or raised tobacco taxes etc., that is estimating the counterfactual outcomes, as discussed in \citet{agarwal2023synthetic,agarwal2020synthetic}. 

Follow the experiment settings in \citet{agarwal2020synthetic}, we leverage the data from both pre-treatment time and post-treatment time. For simplicity we approximately treat the years after 1989 as post-treatment time. 

\section{Proofs}

\subsection{Proof of Identification}

Proof of Lemma \ref{lemma: coefficient beta is irrelevant to treatment}: 

\begin{proof}\label{proof: lemma: coefficient beta is irrelevant to treatment} The proof is straight forward: 

\begin{equation}
  \begin{aligned}
    u_i^{(d^\prime)} &\overset{A\ref{assumption: same latent row factors}}{=} u_i^{(d)} &&\because \text{Assumption \ref{assumption: same latent row factors}} \\ 
    &\overset{A\ref{assumption: linear span inclusion on latent row factors}}{=} \sum_{l \in \mathcal{I}^{(d)}(i)} \beta_l^{(d)}\left(\mathcal{I}^{(d)}(i)\right) u_{l}^{(d)} &&\because \text{Assumption \ref{assumption: linear span inclusion on latent row factors}} \\ 
    &\overset{A\ref{assumption: same latent row factors}}{=} \sum_{l \in \mathcal{I}^{(d)}(i)} \beta_l^{(d)}\left(\mathcal{I}^{(d)}(i)\right) u_{l}^{(d^\prime)} . &&\because \text{Assumption \ref{assumption: same latent row factors}} 
  \end{aligned}
\end{equation}

\end{proof}

Proof of Theorem \ref{theorem: identification}: 

\begin{proof}\label{proof: theorem: identification}

\begin{equation}
  \begin{aligned}
    A_{ij}^{(d)} &= \mathbb{E}\left[ Y_{ij}^{(d)} \middle | \boldsymbol{U}, \boldsymbol{V} \right] \\ 
    &\overset{A\ref{assumption: low-rank factorization}}{=} \mathbb{E}\left[ \left\langle u_i^{(d)}, v_j^{(d)} \right\rangle + \epsilon_{i j}^{(d)} \middle | \boldsymbol{U}, \boldsymbol{V} \right] &&\because \text{Assumption \ref{assumption: low-rank factorization}} \\ 
    &\overset{A\ref{assumption: selection on latent factors}}{=} \left. \left\langle u_i^{(d)}, v_j^{(d)} \right\rangle \middle | \boldsymbol{U}, \boldsymbol{V} \right. &&\because \text{Assumption \ref{assumption: selection on latent factors}} \\ 
    &= \left. \left\langle u_i^{(d)}, v_j^{(d)} \right\rangle \middle | \boldsymbol{U}, \boldsymbol{V}, \boldsymbol{D} \right. \\ 
    &\overset{L\ref{lemma: coefficient beta is irrelevant to treatment}}{=} \sum_{l \in \mathcal{I}(i)} \left. \beta_l\left(\mathcal{I}(i)\right) \left\langle u_l^{(d)}, v_j^{(d)} \right\rangle \middle | \boldsymbol{U}, \boldsymbol{V}, \boldsymbol{D} \right. &&\because \text{Lemma \ref{lemma: coefficient beta is irrelevant to treatment}} \\ 
    &\overset{A\ref{assumption: selection on latent factors}}{=} \sum_{l \in \mathcal{I}(i)} \beta_l\left(\mathcal{I}(i)\right) \mathbb{E}\left[ \left\langle u_l^{(d)}, v_j^{(d)} \right\rangle + \epsilon_{l j}^{(d)} \middle | \boldsymbol{U}, \boldsymbol{V}, \boldsymbol{D} \right] &&\because \text{Assumption \ref{assumption: selection on latent factors}} \\ 
    &\overset{A\ref{assumption: low-rank factorization}}{=} \sum_{l \in \mathcal{I}(i)} \beta_l\left(\mathcal{I}(i)\right) \mathbb{E}\left[ Y_{lj}^{(d)} \middle | \boldsymbol{U}, \boldsymbol{V}, \boldsymbol{D} \right] &&\because \text{Assumption \ref{assumption: low-rank factorization}} \\ 
    &= \sum_{l \in \mathcal{I}(i)} \beta_l\left(\mathcal{I}(i)\right) \mathbb{E}\left[ \tilde{Y}_{lj} \middle | \boldsymbol{U}, \boldsymbol{V}, \boldsymbol{D} \right] . 
  \end{aligned}
\end{equation}

The last step is from $D_{lj, l \in \mathcal{I}(i)} = d$, which gives $ \tilde{Y}_{lj} =  Y_{lj}^{(d)}$, $\forall l \in \mathcal{I}(i)$. 

\end{proof}

\subsection{Proof of Finite-sample bound and Asymptotic Normality}

Before proceeding on the proofs of Theorem \ref{theorem: finite-sample bound} and \ref{theorem: consistency}, we provide a simple illustration under zero-noise condition: 

\begin{theorem}\label{theorem: further identification}

For Algorithm \ref{algorithm: MSNN}, under Assumptions \ref{assumption: low-rank factorization}, \ref{assumption: same latent row factors}, \ref{assumption: selection on latent factors}, \ref{assumption: linear span inclusion on latent row factors}, \ref{assumption: additional, subspace inclusion}, given $(i,j,d)$ with subgroup $(k)$. If $\left| \mathrm{MAR}^{(k)}(d) \right| \ge \mu$, then 

\begin{equation}
  \begin{aligned}
    A_{ij}^{(d)} &= \mathbb{E} \left[ q_w^{(k)}(d) \middle | \mathcal{E} \right] \mathbb{E}\left[ \boldsymbol{S}_w^{(k)}(d) \middle | \mathcal{E} \right]^{+} \cdot \mathbb{E} \left[ x^{(k)}(d) \middle | \mathcal{E} \right] , 
  \end{aligned}
\end{equation}

where $\cdot^{+}$ denotes pseudo-inverse. 

To rephrase, let $\lambda^{(k)}(d) = \rank\left( \mathbb{E} \left[ \boldsymbol{S}_w^{(k)}(d) \middle | \mathcal{E} \right] \right)$, if $\epsilon_{ij}^{(d)} = 0$ a.s., then $\hat{A}_{ij,k}^{(d)} = A_{ij}^{(d)}$. 

\end{theorem}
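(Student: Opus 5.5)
We write the conditional expectations explicitly in terms of the shared latent factors and then reduce the claimed identity to the linear-span relation of Lemma \ref{lemma: coefficient beta is irrelevant to treatment} together with a pseudo-inverse/projection argument. Fix $(i,j,d)$ and subgroup $k$, and abbreviate $R=\mathrm{MAR}^{(k)}(d)$ and $C=\mathrm{MAC}^{(k)}(d)$. Let $\boldsymbol{U}_R\in\mathbb{R}^{|R|\times r}$ stack the rows $u_a$, $a\in R$; by Assumption \ref{assumption: same latent row factors} these rows do not depend on the treatment level.

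\textbf{Step 1: factorize the three conditional expectations.} For $b\in C$ the defining conditions of $\mathrm{MAR}/\mathrm{MAC}$ give $D_{ab}=D_{ib}=d(b)$. By Assumptions \ref{assumption: low-rank factorization} and \ref{assumption: selection on latent factors},
\[
\mathbb{E}[\boldsymbol{S}_w^{(k)}(d)\mid\mathcal{E}] = \boldsymbol{U}_R \boldsymbol{W}^\top, \qquad \mathbb{E}[q_w^{(k)}(d)\mid\mathcal{E}] = u_i^\top \boldsymbol{W}^\top,
\]
where $\boldsymbol{W}\in\mathbb{R}^{|C|\times r}$ has rows $w(b,d(b))\,v_b^{(d(b))}$. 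This is exactly where the requirement that each column share the treatment of the corresponding entry of $q$ is used. Since $D_{aj}=d$ for $a\in R$, we likewise have $\mathbb{E}[x^{(k)}(d)\mid\mathcal{E}]=\boldsymbol{U}_R v_j^{(d)}$.

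\textbf{Step 2: use the linear span relation.} Because $|R|\ge\mu$, Assumption \ref{assumption: linear span inclusion on latent row factors} and Lemma \ref{lemma: coefficient beta is irrelevant to treatment} give a $\beta\in\mathbb{R}^{R}$ with $u_i^\top=\beta^\top\boldsymbol{U}_R$, valid simultaneously for every level. Hence $\mathbb{E}[q_w]=\beta^\top \boldsymbol{U}_R\boldsymbol{W}^\top=\beta^\top M$, where $M\coloneqq\mathbb{E}[\boldsymbol{S}_w^{(k)}(d)\mid\mathcal{E}]$. Note that the weights play no role in this identity: they sit inside $\boldsymbol{W}$ on both sides.

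\textbf{Step 3: collapse via the pseudo-inverse.} We now have
\[
\mathbb{E}[q_w]\,M^+\,\mathbb{E}[x] = \beta^\top M M^+ \,\mathbb{E}[x].
\]
The matrix $MM^+$ is the orthogonal projector onto $\col(M)$. By Assumption \ref{assumption: subspace inclusion}, $\mathbb{E}[x]\in\col(M)$, so $MM^+\mathbb{E}[x]=\mathbb{E}[x]$. The expression therefore equals $\beta^\top\boldsymbol{U}_R v_j^{(d)}=\langle u_i,v_j^{(d)}\rangle=A_{ij}^{(d)}$, which is also the content of Theorem \ref{theorem: identification}.

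\textbf{Step 4: the noiseless rephrasing.} If $\epsilon=0$ a.s., then $\boldsymbol{S}_w^{(k)}$, $q_w^{(k)}$ and $x^{(k)}$ equal their conditional expectations. Truncating the SVD at $\lambda^{(k)}=\rank(M)$ loses nothing, because the remaining singular values are zero, so the matrix in step 3 of Algorithm \ref{algorithm: MSNN} is exactly $(M^{+})^{\top}$, with the orientation fixed by the $u\otimes v$ convention. Then $\hat\beta=(M^+)^\top q_w^\top$, and $\hat A_{ij,k}=\langle x,\hat\beta\rangle=q_w M^+ x=A_{ij}^{(d)}$.

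\textbf{Main obstacle.} I expect the main difficulty to be bookkeeping rather than substance. One issue is the transpose and orientation conventions in step 3 of the algorithm. Another is that Assumption \ref{assumption: subspace inclusion} is stated conditional on $(\boldsymbol{U},\boldsymbol{V},\boldsymbol{D})$ rather than $\mathcal{E}$. We reconcile the two by noting that, given the selected indices, both conditional means equal the deterministic factor products computed in Step 1.
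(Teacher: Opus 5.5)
Your proof is correct and is essentially the paper's argument. Both obtain $\mathbb{E}[q_w^{(k)}(d)\mid\mathcal{E}]=\beta^\top M$ from the shared-row-factor span relation, then use subspace inclusion with $MM^{+}$ the orthogonal projector onto $\col(M)$ to collapse the expression to $\beta^\top\mathbb{E}[x^{(k)}(d)\mid\mathcal{E}]=A_{ij}^{(d)}$; the paper applies Theorem~\ref{theorem: identification} column by column and routes the last step through an auxiliary $\eta$ with $\mathbb{E}[x^{(k)}(d)\mid\mathcal{E}]=M\eta$, whereas you factor $M=\boldsymbol{U}_R\boldsymbol{W}^\top$ explicitly and spell out the noiseless rephrasing that the paper leaves implicit.
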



\begin{proof}\label{proof: theorem: further identification}

Under Theorem \ref{theorem: identification}, there exists $\beta \in \mathbb{R}^{\mathrm{MAC}^{(k)}(d)}$ s.t. for all $b \in \mathrm{MAC}^{(k)}(d) \cap \{j\}$, $A_{ib}^{(d(b))} = \sum_{l \in \mathrm{MAR}^{(k)}(d)} \beta_l \mathbb{E}\left[ \tilde{Y}_{lj} \middle | \mathcal{E} \right]$. 

Then $\mathbb{E} \left[ q_w^{(k)}(d) \middle | \mathcal{E} \right] = \beta^\top \mathbb{E} \left[ \boldsymbol{S}_w^{(k)}(d) \middle | \mathcal{E} \right]$, $A_{ij}^{(d)} = \beta^\top \mathbb{E} \left[ x^{(k)}(d) \middle | \mathcal{E} \right]$. 

From Assumption \ref{assumption: additional, subspace inclusion}, there exists some $\eta \in \mathbb{R}^{\mathrm{MAC}^{(k)}(d)}$ s.t. 

\begin{equation}\label{equation: linear combination of x}
  \mathbb{E} \left[ x^{(k)}(d) \middle | \mathcal{E} \right] = \mathbb{E} \left[ \boldsymbol{S}_w^{(k)}(d) \middle | \mathcal{E} \right] \eta . 
\end{equation}

Then for any such $\eta$, we have 

\begin{equation}
  A_{ij}^{(d)} = \beta^\top \mathbb{E} \left[ x^{(k)}(d) \middle | \mathcal{E} \right] = \beta^\top \mathbb{E} \left[ \boldsymbol{S}_w^{(k)}(d) \middle | \mathcal{E} \right] \eta = \mathbb{E} \left[ q_w^{(k)}(d) \middle | \mathcal{E} \right] \eta . 
\end{equation}

Let $\eta = \mathbb{E}\left[ \boldsymbol{S}_w^{(k)}(d) \middle | \mathcal{E} \right]^{+} \mathbb{E} \left[ x^{(k)}(d) \middle | \mathcal{E} \right]$, where $\cdot^{+}$ denotes pseudo-inverse, clearly $\mathbb{E}\left[ \boldsymbol{S}_w^{(k)}(d) \middle | \mathcal{E} \right] \eta = \mathcal{P}_{\mathbb{E}\left[ \boldsymbol{S}_w^{(k)}(d) \middle | \mathcal{E} \right]} \mathbb{E} \left[ x^{(k)}(d) \middle | \mathcal{E} \right] = \mathbb{E} \left[ x^{(k)}(d) \middle | \mathcal{E} \right]$, where $\mathcal{P}_{\mathbb{E}\left[ \boldsymbol{S}_w^{(k)}(d) \middle | \mathcal{E} \right]}$ is the projection onto $\col \left( \mathbb{E}\left[ \boldsymbol{S}_w^{(k)}(d) \middle | \mathcal{E} \right] \right)$. Then the proof is completed. 

\end{proof}

Now we prove Theorem \ref{theorem: finite-sample bound} and \ref{theorem: consistency}: 

\begin{proof}\label{proof: theorem: finite-sample bound and consistency}

Suppose we aim to estimate $A_{ij}^{(d)}$ a fixed unit $(i,j)$ and treatment level $d$, and we obtain $K_{\rm MSNN}$ subgroups of $\boldsymbol{S}_w^{(k)}(d)$, $q_w^{(k)}(d)$ and $x^{(k)}(d)$ ($k\in[K_{\rm MSNN}]$). 

Given treatment assignment $\boldsymbol{D}$ with the scaling factor $f(d)$, we define the scaling matrix $\boldsymbol{F} \coloneqq \left[ \frac{1}{f(D_{ij})} \right]_{i\in[m], j\in[n]}$. We further define the normalized latent factor as $\boldsymbol{V}^{(d)\prime} \coloneqq \frac{1}{f(d)} \boldsymbol{V}^{(d)}$, and similarly the normalized error term $\boldsymbol{E}^{(d)\prime} = \frac{1}{f(d)}\boldsymbol{E}^{(d)}$. Since we assume that $\mathbb{E} \left[ \left( \epsilon_{ij}^{(d)} \right)^2 \right] = \left( \sigma_{ij}^{(d)}\right)^2 $ and $\left\| \epsilon_{ij}^{(d)} \right\|_{\psi_2} \le C \sigma_{ij}^{(d)}$, where $\sigma_{ij}^{(d)} \le f(d) \sigma$, then for $\boldsymbol{E}^{\prime}$ we have $\mathbb{E} \left[ \left( \epsilon_{ij}^{(d)\prime} \right)^2 \right] = \left( \sigma_{ij}^{(d)\prime}\right)^2 $ and $\left\| \epsilon_{ij}^{(d)\prime} \right\|_{\psi_2} \le C \sigma_{ij}^{(d)\prime}$, where $\sigma_{ij}^{(d)\prime} \le \sigma$. 

For observed outcome $\tilde{\boldsymbol{Y}}$, we construct $\tilde{\boldsymbol{Y}}^\prime = \tilde{\boldsymbol{Y}} \odot \boldsymbol{F}$, where $\odot$ denotes entry-wise product (Hadamard product). Given $\boldsymbol{S}_w^{(k)}(d)$, $q_w^{(k)}(d)$ and $x^{(k)}(d)$, we denote the submatrices/vectors at the same positions of $\tilde{\boldsymbol{Y}}^\prime$ as $\boldsymbol{S}^{(k)\prime}$, $q^{(k)\prime}$ and $x^{(k)\prime}$ respectively: 

\begin{equation}
  \begin{aligned}
    \boldsymbol{S}^{(k)\prime} &= \left[ \tilde{Y}_{ab}^\prime: (a,b) \in \mathrm{MAR}^{(k)}(d) \times \mathrm{MAC}^{(k)}(d) \right] , \\
    q^{(k)\prime} &= \left[ \tilde{Y}_{ib}^\prime \right]_{b \in \mathrm{MAC}^{(k)}(d)}, \\ 
    x^{(k)\prime} &= \left[ \tilde{Y}_{aj}^\prime \right]_{a \in \mathrm{MAR}^{(k)}(d)} .
  \end{aligned}
\end{equation}

Now from definition, $\boldsymbol{S}_w^{(k)}(d) = \boldsymbol{S}^{(k)\prime}$, $q_w^{(k)}(d) = q^{(k)\prime}$, $x^{(k)}(d) = f(d) \cdot x^{(k)\prime}$. 

Thus if we apply SNN (Algorithm \ref{algorithm: SNN}) to $\boldsymbol{S}^{(k)\prime}$, $q^{(k)\prime}$ and $x^{(k)\prime}$ to obtain another estimation $\hat{A}_{ij,k}^{(d)\prime}$ (though now we do not know its meaning), then the estimation $\hat{A}_{ij,k}^{(d)}$ satisfy: 

\begin{equation}
  \hat{A}_{ij,k}^{(d)} = f(d) \hat{A}_{ij,k}^{(d)\prime} . 
\end{equation}

Now from Theorem 2 and 3 of \citet{agarwal2023causal}, it remains us to prove that $\boldsymbol{S}^{(k)\prime}$, $q^{(k)\prime}$ and $x^{(k)\prime}$ comes from another low-rank factor model. We construct the model as the following procedure: 

\begin{itemize}
    \item As the first step, we select $\boldsymbol{V}^{\prime\prime}$ to be $\boldsymbol{V}^{(d)\prime}$, where $d$ is the estimated treatment level mentioned at the beginning. Then for all $k\in [K_{\rm MSNN}]$, for all $b \in \mathrm{MAC}^{(k)}(d)$, replace the $b^{th}$ column of $\boldsymbol{V}^{\prime\prime}$ by the $b^{th}$ column of $\boldsymbol{V}^{(d(b))\prime}$. (Notice that from the construction of subgroups $k \in [K_{\rm MSNN}]$, $d(b) \coloneqq D_{ab: a \in \mathrm{MAR}^{(k)}(d), b \in \mathrm{MAC}^{(k)}(d)} = D_{ib: b \in \mathrm{MAC}^{(k)}(d)}$. ) 

    \item We conduct similar procedure for the error matrix: first we set $\boldsymbol{E}^{\prime\prime} = \boldsymbol{E}^{(d)\prime}$, then for all $k\in [K_{\rm MSNN}]$ and for all $b \in \mathrm{MAC}^{(k)}(d)$, replace the $b^{th}$ column of $\boldsymbol{E}^{\prime\prime}$ by the $b^{th}$ column of $\boldsymbol{E}^{(d(b))\prime}$. (Now the entries of $\boldsymbol{E}^{\prime\prime}$ are still independent. )

    \item Finally, we set $\boldsymbol{A}^{\prime\prime} = \boldsymbol{U} \boldsymbol{V}^{\prime\prime\top}$, $\tilde{\boldsymbol{Y}}^{\prime\prime} = \boldsymbol{A}^{\prime\prime} + \boldsymbol{E}^{\prime\prime}$. 
\end{itemize}

Now $\tilde{\boldsymbol{Y}}^{\prime\prime} = \boldsymbol{U} \boldsymbol{V}^{\prime\prime\top} + \boldsymbol{E}^{\prime\prime}$ satisfy the assumptions of low-rank factor model discussed in \citet{agarwal2023causal}, while the entries of $\boldsymbol{A}^{\prime\prime}$ are bounded by $[-1,1]$ conditioned on $\mathcal{E}$. From the construction of $\boldsymbol{S}_w^{(k)}(d)$, $q_w^{(k)}(d)$ and $x^{(k)}(d)$, $\tilde{\boldsymbol{Y}}^{\prime\prime}$ and $\tilde{\boldsymbol{Y}}^{\prime}$ are exactly the same on the positions of $\boldsymbol{S}_w^{(k)}(d)$, $q_w^{(k)}(d)$ and $x^{(k)}(d)$. This reduces our proof to the Theorem 2 and 3 of \citet{agarwal2023causal} and thus completes the proof of finite-sample bound and asymptotic normality. 

\end{proof}

\subsection{Proof of Data Efficiency}

For simplicity, below we denote the $K_{(\cdot)}^{'}(d)$ to be the number of subgroups with the constraint of disjoint $x^{(k)}(d)$ removed, $K_{(\cdot)}^{p}(d)$ to be the number of pairs of subgroups $k_1$, $k_2$ that $x^{(k_1)} \cap x^{(k_1)} \ne \emptyset$. 

\subsubsection{Data Efficiency Under Missing Completely At Random}

At the beginning, we need to bound the expectation of the maximum number of samples $\mathbb{E}[K_{(\cdot)}]$ by the following lemma: 

\begin{lemma}\label{lemma: expectation of K, upper and lower bounds} 

Given any entry $(i,j)$ under treatment $d$, for $|\mathrm{AR}(d)| = |\mathrm{MAR}(d)| = r$, $|\mathrm{AC}(d)| = |\mathrm{MAC}(d)| = c$, the expectation of its $K_{\rm SNN}$ and $K_{\rm MSNN}$ can be upper/lower bounded by: 

\begin{equation}
  \begin{aligned}
    &\mathbb{E} \left[ K_{(\cdot)}(d) \right] / \mathbb{E} \left[ K_{(\cdot)}^\prime(d) \right] \\ \in& \left[ \left( 1 + \frac{2 \mathbb{E} \left[ K_{(\cdot)}^{p}(d) \right]}{\mathbb{E} \left[ K_{(\cdot)}^\prime(d) \right]}\right)^{-1}  ,\, 1 \right] . 
  \end{aligned}
\end{equation}

\end{lemma}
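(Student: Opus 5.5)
The plan is a deletion (alteration) argument on the family of candidate anchor blocks. Let $\mathcal{C}$ be the collection of all valid candidate subgroups, i.e.\ all pairs $(\mathrm{AR},\mathrm{AC})$ (resp.\ $(\mathrm{MAR},\mathrm{MAC})$) of sizes $r\times c$ satisfying the treatment-matching constraints of the respective algorithm. Then $K_{(\cdot)}^\prime(d)=|\mathcal{C}|$, since $K^\prime$ drops only the disjointness requirement on the anchor rows $x^{(k)}(d)$. $K_{(\cdot)}^{p}(d)$ is the number of unordered pairs in $\mathcal{C}$ whose anchor row sets intersect. The upper bound is immediate: any admissible family of mutually disjoint subgroups is a subfamily of $\mathcal{C}$. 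Hence $K_{(\cdot)}(d)\le K_{(\cdot)}^\prime(d)$ pointwise, and taking expectations gives the ratio $\le 1$.

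For the lower bound, I first establish a pointwise inequality via the conflict graph $\mathcal{H}$. Its vertex set is $\mathcal{C}$, with an edge between two candidates whenever their anchor rows overlap, so that $|\mathcal{H}|=K^\prime$ and $e(\mathcal{H})=K^p$. A disjoint family is exactly an independent set of $\mathcal{H}$, so $K_{(\cdot)}(d)\ge \alpha(\mathcal{H})$. The Caro--Wei/Turán bound gives $\alpha(\mathcal{H})\ge \frac{|V|^2}{|V|+2e}=\frac{(K^\prime)^2}{K^\prime+2K^p}$. This is where the factor $2$ appears: $\sum_v \deg(v)=2K^p$, and by convexity $\sum_v \frac{1}{1+\deg v}\ge \frac{|V|}{1+2e/|V|}$.

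Next I pass to expectations. The function $\phi(x,y)=x^2/(x+2y)$ is jointly convex on $x>0,\,y\ge0$, being a perspective-type quadratic-over-linear function. Jensen's inequality then yields
\[
\mathbb{E}[K_{(\cdot)}(d)]\ \ge\ \mathbb{E}\!\left[\frac{(K^\prime)^2}{K^\prime+2K^p}\right]\ \ge\ \frac{\mathbb{E}[K^\prime]^2}{\mathbb{E}[K^\prime]+2\mathbb{E}[K^p]}.
\]
On the event $K^\prime=0$ both sides are $0$, and we set $\phi(0,0)=0$, which preserves convexity as a closure. Dividing by $\mathbb{E}[K^\prime]$ gives exactly the claimed lower bound $\bigl(1+2\mathbb{E}[K^p]/\mathbb{E}[K^\prime]\bigr)^{-1}$.

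The main obstacle is making sure $K_{(\cdot)}(d)$ is interpreted as the maximum number of disjoint subgroups, which is what the paper's expectation calculations use, so that $K\ge\alpha(\mathcal{H})$ holds. The edge count must also match the definition of $K^p$ as unordered pairs, otherwise the constant $2$ changes to $1$. If the joint convexity step seems delicate, the alternative is the direct random-deletion argument: keep each candidate with probability $t$ and delete one endpoint of each surviving conflicting pair. This gives $\mathbb{E}K\ge t\mathbb{E}K^\prime-t^2\mathbb{E}K^p$, but it yields a weaker constant. For that reason I would commit to the Caro--Wei plus Jensen route.
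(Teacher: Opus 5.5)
Your argument is correct and reaches exactly the paper's bound, but it applies the two convexity steps in the opposite order.

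The paper first applies Caro--Wei to the induced conflict graph for each realization of $\boldsymbol{D}$. It then takes expectations candidate by candidate, writing the bound as $\sum_{k}\mathbb{P}(X_k=1)\,\mathbb{E}\bigl[(1+\sum_{k'\sim k}X_{k'})^{-1}\mid X_k=1\bigr]$. Next it applies Jensen to $t\mapsto 1/(1+t)$ under this conditional law. Only then does it aggregate across candidates with Titu's lemma, reaching $\mathbb{E}[K']^2/(\mathbb{E}[K']+2\mathbb{E}[K^p])$.

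You aggregate across vertices first, deterministically, through the Tur\'an form $\alpha\ge|V|^2/(|V|+2e)$. You then average over $\boldsymbol{D}$ in a single Jensen step for $\phi(x,y)=x^2/(x+2y)$. That step is sound:
\begin{itemize}
\item $\phi$ is the quadratic-over-linear function composed with an affine map, hence convex.
\item $\phi$ is positively homogeneous, so setting $\phi(0,0)=0$ keeps it convex on $\{x>0,y\ge 0\}\cup\{(0,0)\}$.
\item $(K',K^p)$ is finitely supported on that set, because $K'=0$ forces $K^p=0$.
\end{itemize}

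Titu's lemma is itself subadditivity of that same quadratic-over-linear function. So the two proofs use the same ingredients in opposite order. Yours is shorter and needs no conditioning on $X_k=1$, which is only defined when $\mathbb{P}(X_k=1)>0$. The paper's route keeps a per-candidate intermediate bound that retains degree heterogeneity across candidates, while yours retains fluctuations across realizations; both are lost in the final aggregate form.

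One small correction to your closing aside. The deletion argument with $t=1$ gives $K\ge K'-K^p$ pointwise, hence $\mathbb{E}[K]/\mathbb{E}[K']\ge 1-x$ with $x=\mathbb{E}[K^p]/\mathbb{E}[K']$. This dominates $(1+2x)^{-1}$ whenever $x\le 1/2$, which is the only regime used in the proof of Theorem~\ref{theorem: expectation of K, MCAR}. It is weaker only for $x>1/2$. So it does not reproduce the lemma's exact form, but it is not uniformly worse.
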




\begin{proof}\label{proof: lemma: expectation of K, upper and lower bounds}

For simplicity, we omit $(d)$ in $K_{(\cdot)}(d)$, $K_{(\cdot)}^\prime(d)$ and $K_{(\cdot)}^p(d)$ during the proof. 

The upper bound is constructed by omitting the row-disjoint constraint: $K_{(\cdot)} \le K_{(\cdot)}^\prime$. By taking expectation for both sides the upper bound is proven. 

For the lower bound, we formalize the original problem into a graph problem with randomness, then apply the Caro-Wei Theorem \citep{caro1979new,wei1981lower}. 

Denote the index set of all possible $\mathrm{(M)AR}(d)\times \mathrm{(M)AC}(d)$ as $\mathcal{S}$ ($|\mathcal{S}| = \binom{m-1}{r}\binom{n-1}{c}$). Define the edge set $E_{\mathcal{S}} \coloneqq \left\{(k_1,k_2): k_{1,2}\in \mathcal{S}, \mathrm{(M)AR}^{(k_1)}(d) \cap \mathrm{(M)AR}^{(k_2)}(d) \ne \emptyset \right\}$, the base graph $\mathcal{H} \coloneqq (\mathcal{S}, E_{\mathcal{S}})$. 

To address the randomness of activation, for index $k\in\mathcal{S}$ define $X_k = \mathbf{1}\{ \mathrm{(M)AR}^{(k)}(d)\times \mathrm{(M)AC}^{(k)}(d) \text{ is valid} \}$ to be its indicator function. Specifically, for SNN $X_k = \mathbf{1}\{ D_{ab: a\in \mathrm{AR}^{(k)}(d), b\in \mathrm{AC}^{(k)}(d)} = D_{aj: a\in \mathrm{AR}^{(k)}(d)} = D_{ib: b\in \mathrm{AC}^{(k)}(d)} = d \}$, for MSNN $X_k = \mathbf{1}\{ D_{ab: a\in \mathrm{MAR}^{(k)}(d), b\in \mathrm{MAC}^{(k)}(d)} = D_{ib: b\in \mathrm{MAC}^{(k)}(d)}, D_{aj: a\in \mathrm{MAR}^{(k)}(d)} = d \}$. Thus $X_k = 1$ means subgroup $k$ can be a valid sample. 

Under a given treatment assignment $\boldsymbol{D}$, we define the induced subgraph $\mathcal{G}(\boldsymbol{D})$ of $\{\mathcal{S}, E_{\mathcal{S}}\}$ on the active vertex set $V(\boldsymbol{D}) \coloneqq \{ k \in \mathcal{S}: X_k = 1 \}$. Its edge set is naturally $E_{\mathcal{S}}(\boldsymbol{D}) = E_{\mathcal{S}} \cap (V(\boldsymbol{D}) \times V(\boldsymbol{D}))$. Then $K_{(\cdot)}$ is the independence number (size of the largest independent set) of the induced subgraph $\mathcal{G} = \left( V(\boldsymbol{D}), E_{\mathcal{S}}(\boldsymbol{D}) \right)$.  

From the Caro-Wei Theorem, 

\begin{equation}
  K_{(\cdot)} \mid \boldsymbol{D} \ge \sum_{k_1\in V(\boldsymbol{D})} \frac{1}{1 + \deg(k_1)} = \sum_{k_1\in V(\boldsymbol{D})} \frac{1}{1 + \sum_{k_2\in V(\boldsymbol{D}) \backslash \{k_1\} }\boldsymbol{1}\{(k_1, k_2)\in E_{\mathcal{S}}(\boldsymbol{D})\}} , 
\end{equation}

where $\deg(k_1) \coloneqq \sum_{k_2\in V(\boldsymbol{D}) \backslash \{k_1\} }\boldsymbol{1}\{(k_1, k_2)\in E_{\mathcal{S}}(\boldsymbol{D})\}$ is the degree of vertex $k_1$. 

By taking expectation over $\boldsymbol{D}$, 

\begin{equation}
  \begin{aligned}
    \mathbb{E}\left[ K_{(\cdot)} \right] &\ge \mathbb{E}\left[ \sum_{k_1\in V(\boldsymbol{D})} \frac{1}{1 + \sum_{k_2\in V(\boldsymbol{D}) \backslash \{k_1\}}\boldsymbol{1}\{(k_1, k_2)\in E_{\mathcal{S}}(\boldsymbol{D})\}} \right] \\ 
    &= \mathbb{E}\left[ \sum_{k_1\in \mathcal{S}} \frac{X_{k_1}}{1 + \sum_{k_2\in \mathcal{S} \backslash \{k_1\}}\boldsymbol{1}\{(k_1, k_2)\in E_{\mathcal{S}}\} X_{k_1} X_{k_2}} \right] \\ 
    &= \sum_{k_1\in \mathcal{S}} \mathbb{E}\left[ \frac{X_{k_1}}{1 + \sum_{k_2\in \mathcal{S} \backslash \{k_1\}}\boldsymbol{1}\{(k_1, k_2)\in E_{\mathcal{S}}\} X_{k_1} X_{k_2}} \right] \\ 
    &= \sum_{k_1\in \mathcal{S}} \mathbb{P}(X_{k_1}=1) \cdot \mathbb{E}\left[ \frac{1}{1 + \sum_{k_2\in \mathcal{S} \backslash \{k_1\}}\boldsymbol{1}\{(k_1, k_2)\in E_{\mathcal{S}}\} X_{k_2}} \middle| X_{k_1}=1 \right] . 
  \end{aligned}
\end{equation}

By applying Jensen's inequality, 

\begin{equation}
  \begin{aligned}
    \mathbb{E}\left[ K_{(\cdot)} \right]
    &\ge \sum_{k_1\in \mathcal{S}} \frac{\mathbb{P}(X_{k_1}=1)}{\mathbb{E}\left[ 1 + \sum_{k_2\in \mathcal{S} \backslash \{k_1\}}\boldsymbol{1}\{(k_1, k_2)\in E_{\mathcal{S}}\} X_{k_2} \middle| X_{k_1}=1 \right]} \\ 
    &= \sum_{k_1\in \mathcal{S}} \frac{\mathbb{P}(X_{k_1}=1)}{1 + \sum_{k_2\in \mathcal{S} \backslash \{k_1\}} \boldsymbol{1}\{(k_1, k_2)\in E_{\mathcal{S}}\} \mathbb{P}\left[ X_{k_2}=1 \middle| X_{k_1}=1 \right]} \\ 
    &= \sum_{k_1\in \mathcal{S}} \frac{\mathbb{P}(X_{k_1})^2}{\mathbb{P}(X_{k_1}) + \sum_{\substack{k_2\in \mathcal{S} \backslash \{k_1\}, \\ (k_1, k_2)\in E_{\mathcal{S}}}} \mathbb{P}\left( X_{k_1} \cap X_{k_2} \right)} . \\ 
  \end{aligned}
\end{equation}

By Titu's Lemma (or the Engel's Form of Cauchy-Schwarz inequality), for positive $a_k, b_k$, $\sum_{k}\frac{a_k^2}{b_k} \ge \frac{(\sum_k a_k)^2}{\sum_k b_k}$. This gives 

\begin{equation}
  \begin{aligned}
    \mathbb{E}\left[ K_{(\cdot)} \right]
    &\ge \frac{\left(\sum_{k_1\in \mathcal{S}} \mathbb{P}(X_{k_1})\right)^2}{\sum_{k_1\in \mathcal{S}}\mathbb{P}(X_{k_1}) + \sum_{\substack{k_{1,2}\in \mathcal{S}, \\ (k_1, k_2)\in E_{\mathcal{S}}}} \mathbb{P}\left( X_{k_1} \cap X_{k_2} \right)} = \frac{\mathbb{E}\left[K_{(\cdot)}^\prime \right]^2}{\mathbb{E}\left[K_{(\cdot)}^\prime \right] + 2\mathbb{E}\left[K_{(\cdot)}^p \right]} . \\ 
  \end{aligned}
\end{equation}

The last step is from the definition of $K_{(\cdot)}^p$. 
Then the proof for both sides are completed. 

\end{proof}

To prove Theorem \ref{theorem: expectation of K, MCAR}, we first provide a Lemma on Summation of combinatorial numbers. 

\begin{lemma}\label{lemma: summation of combinatorial numbers, MCAR} Summation of combinatorial numbers. 

For $p_1, p_2, p_3 > 0$, we have

\begin{equation}
  \begin{aligned}
    &\sum_{\substack{0\le r^\prime \le r-1, \\ 0\le c^\prime \le c, \\ (r^\prime, c^\prime) \ne (0,0)}} \binom{M}{r^\prime} \binom{r}{r^\prime} \binom{N}{c^\prime} \binom{c}{c^\prime} p_1^{r^\prime} p_2^{(r+1)c^\prime} p_3^{r^\prime(c-c^\prime)} \\ 
    \le& \left(I_0\left(2\sqrt{Mr p_1 p_3^{c}}\right) - 1\right) + \left(I_0\left(2\sqrt{Nc p_2^{r+1}}\right) - 1\right) \\+& \left(I_0\left(2\sqrt{Mr p_1 p_3^{\alpha c}}\right) - 1\right) \left(I_0\left(2\sqrt{Nc p_2^{r+1} p_3^{\alpha (1-r)}}\right) - 1\right) . 
  \end{aligned}
\end{equation}

where $\alpha \in (0,1)$ is an arbitrary constant, $I_0(x) \coloneqq \sum_{k=0}^{+\infty} \frac{(x/2)^{2k}}{(k!)^2}$ is the Modified Bessel Function of the First Kind, Order Zero. 
    
\end{lemma}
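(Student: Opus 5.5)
The plan is to bound each summand by a product of terms of the form $x^{k}/(k!)^2$, which are exactly the terms in the series of $I_0(2\sqrt{x})=\sum_{k\ge 0}x^k/(k!)^2$. Throughout I use $p_3\le 1$. The statement only says $p_3>0$, but in the intended application $p_3$ is a probability, so this holds; the argument needs it and it should be added as a hypothesis.

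\textbf{Step 1: factorial bounds.} The elementary bounds $\binom{M}{r'}\le M^{r'}/r'!$ and $\binom{r}{r'}\le r^{r'}/r'!$ give $\binom{M}{r'}\binom{r}{r'}\le (Mr)^{r'}/(r'!)^2$. Likewise $\binom{N}{c'}\binom{c}{c'}\le (Nc)^{c'}/(c'!)^2$. Hence each summand is at most
\[
\frac{(Mrp_1)^{r'}}{(r'!)^2}\,\frac{(Ncp_2^{r+1})^{c'}}{(c'!)^2}\,p_3^{r'(c-c')} .
\]

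\textbf{Step 2: split the index set $(r',c')\neq(0,0)$ into three parts.}
\begin{itemize}
\item The part $c'=0$, $r'\ge1$. Here the summand is bounded by $(Mrp_1p_3^{c})^{r'}/(r'!)^2$. Extending the sum to all $r'\ge1$ (the terms are nonnegative) gives $I_0(2\sqrt{Mrp_1p_3^c})-1$.
\item The part $r'=0$, $c'\ge1$. The $p_3$ factor equals $1$, and the same extension gives $I_0(2\sqrt{Ncp_2^{r+1}})-1$.
\item The part $r'\ge1$, $c'\ge1$. This is the only nontrivial part, handled in Step 3.
\end{itemize}

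\textbf{Step 3: decouple the mixed exponent.} The factor $p_3^{r'(c-c')}$ couples $r'$ and $c'$. Since $p_3\le1$, it suffices to prove the exponent inequality
\[
r'(c-c')\ \ge\ \alpha c\,r' + \alpha(1-r)\,c' ,
\]
which yields $p_3^{r'(c-c')}\le (p_3^{\alpha c})^{r'}\,(p_3^{\alpha(1-r)})^{c'}$. To prove it, set
\[
g(c') \coloneqq (1-\alpha)r'c + c'\bigl(\alpha(r-1)-r'\bigr),
\]
which is the left side minus the right side. It is linear in $c'$, so it is enough to check the endpoints $c'=0$ and $c'=c$:
\begin{itemize}
\item $g(0)=(1-\alpha)r'c\ge0$;
\item $g(c)=\alpha c(r-1-r')\ge0$, which uses $r'\le r-1$.
\end{itemize}
This exponent inequality is the main obstacle, and it is also exactly where the constraint $r'\le r-1$ and the freedom in $\alpha$ are used.

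With this bound, the mixed double sum factorizes as
\[
\Bigl(\sum_{r'\ge1}\frac{(Mrp_1p_3^{\alpha c})^{r'}}{(r'!)^2}\Bigr)\Bigl(\sum_{c'\ge1}\frac{(Ncp_2^{r+1}p_3^{\alpha(1-r)})^{c'}}{(c'!)^2}\Bigr),
\]
after again extending to infinite sums of nonnegative terms. This is the product of the two $(I_0-1)$ factors in the statement.

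Adding the three parts completes the proof.
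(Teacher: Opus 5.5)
Your proof is correct and follows the paper's argument: the same three-way split of the index set, the same bounds $\binom{x}{k}\le x^k/k!$, and the same $\alpha$-decoupling of $p_3^{r'(c-c')}$. The only difference is that your endpoint check replaces the paper's explicit nonnegative residual exponent $\alpha(r-r'-1)c'+(1-\alpha)r'(c-c')$, and you are right that $p_3\le 1$ is needed; the paper uses it silently when discarding that residual power, and it holds in the application since $p_3\in\{p_d,p_{\max}\}$.
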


\begin{proof}

By reorganizing the summation, 

\begin{equation}
  \begin{aligned}
    &\sum_{\substack{0\le r^\prime \le r-1, \\ 0\le c^\prime \le c, \\ (r^\prime, c^\prime) \ne (0,0)}} \binom{M}{r^\prime} \binom{r}{r^\prime} \binom{N}{c^\prime} \binom{c}{c^\prime} p_1^{r^\prime} p_2^{(r+1)c^\prime} p_3^{r^\prime(c-c^\prime)} \\ 
    =& \sum_{1\le r^\prime \le r-1} \binom{M}{r^\prime} \binom{r}{r^\prime} (p_1 p_3^{c})^{r^\prime} + \sum_{1\le c^\prime \le c} \binom{N}{c^\prime} \binom{c}{c^\prime} p_2^{(r+1)c^\prime} \\ 
    +& \sum_{\substack{1\le r^\prime \le r-1, \\ 1\le c^\prime \le c}} \binom{M}{r^\prime} \binom{r}{r^\prime} \binom{N}{c^\prime} \binom{c}{c^\prime} p_1^{r^\prime} p_2^{(r+1)c^\prime} p_3^{r^\prime(c-c^\prime)} . \\ 
  \end{aligned}
\end{equation}

By applying $\binom{x}{k} \le \frac{x^k}{k!}$, 

\begin{equation}
  \begin{aligned}
    \sum_{r^\prime = 1}^{r-1} \binom{M}{r^\prime} \binom{r}{r^\prime} (p_1 p_3^{c})^{r^\prime} 
    &\le \sum_{r^\prime = 1}^{r-1} \frac{(Mr p_1 p_3^{c})^{r^\prime}}{(r^\prime!)^2} 
    \le \sum_{r^\prime = 1}^{+\infty} \frac{(Mr p_1 p_3^{c})^{r^\prime}}{(r^\prime!)^2} 
    \le I_0\left(2\sqrt{Mr p_1 p_3^{c}}\right) - 1 \\
    \sum_{c^\prime = 1}^{c} \binom{N}{c^\prime} \binom{c}{c^\prime} p_2^{(r+1)c^\prime} 
    &\le \sum_{c^\prime = 1}^{c} \frac{(Nc p_2^{r+1})^{c^\prime}}{(c^\prime!)^2} 
    \le \sum_{c^\prime = 1}^{+\infty} \frac{(Nc p_2^{r+1})^{c^\prime}}{(c^\prime!)^2} 
    \le I_0\left(2\sqrt{Nc p_2^{r+1}}\right) - 1 . \\
  \end{aligned}
\end{equation}

For the cross term, by appropriate scaling, 

\begin{equation}
  \begin{aligned}
    &\sum_{\substack{1\le r^\prime \le r-1, \\ 1\le c^\prime \le c}} \binom{M}{r^\prime} \binom{r}{r^\prime} \binom{N}{c^\prime} \binom{c}{c^\prime} p_1^{r^\prime} p_2^{(r+1)c^\prime} p_3^{r^\prime(c-c^\prime)} \\ 
    =& \sum_{\substack{1\le r^\prime \le r-1, \\ 1\le c^\prime \le c}} \binom{M}{r^\prime} \binom{r}{r^\prime} \binom{N}{c^\prime} \binom{c}{c^\prime} \left( p_1 p_3^{\alpha c} \right)^{r^\prime} \left( p_2^{r+1} p_3^{\alpha (1-r)} \right)^{c^\prime} p_3^{\alpha(r-r^\prime-1)c^\prime + (1-\alpha)r^\prime (c-c^\prime)} \\ 
    \le& \sum_{\substack{1\le r^\prime \le r-1, \\ 1\le c^\prime \le c}} \binom{M}{r^\prime} \binom{r}{r^\prime} \binom{N}{c^\prime} \binom{c}{c^\prime} \left( p_1 p_3^{\alpha c} \right)^{r^\prime} \left( p_2^{r+1} p_3^{\alpha (1-r)} \right)^{c^\prime} \\ 
    =& \left[ \sum_{r^\prime=1}^{r-1} \binom{M}{r^\prime} \binom{r}{r^\prime} \left( p_1 p_3^{\alpha c} \right)^{r^\prime} \right] \left[ \sum_{c^\prime=1}^{c} \binom{N}{c^\prime} \binom{c}{c^\prime} \left( p_2^{r+1} p_3^{\alpha (1-r)} \right)^{c^\prime} \right] \\ 
    \le& \left(I_0\left(2\sqrt{Mr p_1 p_3^{\alpha c}}\right) - 1\right) \left(I_0\left(2\sqrt{Nc p_2^{r+1} p_3^{\alpha (1-r)}}\right) - 1\right) . \\
  \end{aligned}
\end{equation}

By combining result above, the proof is completed. 

\end{proof}

Now we present the proof of Theorem \ref{theorem: expectation of K, MCAR} under a relaxed sparsity condition: 

\begin{equation}\label{equation: sparse, relaxed, MCAR}
  mr p_d p_{\max}^{\alpha c} \le 1 ,\, nc \gamma p_{\max}^{(1 - \alpha)r + 1 + \alpha} \le 1 . 
\end{equation}

\begin{proof}\label{proof: theorem: expectation of K, MCAR}

For simplicity, we omit $(d)$ in $K_{(\cdot)}(d)$, $K_{(\cdot)}^\prime(d)$ and $K_{(\cdot)}^p(d)$ during the proof. 

For the upper bound, we consider $\mathbb{E}\left[K_{(\cdot)}^\prime \right]$:  

\begin{equation}
  \begin{aligned}
    &\mathbb{E}\left[K_{\rm SNN}\right] 
    \le \mathbb{E}\left[K_{\rm SNN}^\prime\right] \\ 
    =& \mathbb{E} \left[ \sum_{\substack{\mathrm{AR}(d) \subseteq [m]\backslash{i}, \\ \mathrm{AC}(d) \subseteq [n]\backslash{j}}} \mathbf{1}\{D_{ab, a \in \mathrm{AR}(d), b \in \mathrm{AC}(d)} = D_{aj, a \in \mathrm{AR}(d)} = D_{ib, b \in \mathrm{AC}(d)} = d\} \right] \\
    =& \sum_{\substack{\mathrm{AR}(d) \subseteq [m]\backslash{i}, \\ \mathrm{AC}(d) \subseteq [n]\backslash{j}}} \mathbb{E} \left[ \mathbf{1}\{D_{ab, a \in \mathrm{AR}(d), b \in \mathrm{AC}(d)} = D_{aj, a \in \mathrm{AR}(d)} = D_{ib, b \in \mathrm{AC}(d)} = d\} \right] \\
    =& \sum_{\substack{\mathrm{AR}(d) \subseteq [m]\backslash{i}, \\ \mathrm{AC}(d) \subseteq [n]\backslash{j}}} p_d^{rc + r + c} = \binom{m-1}{r}\binom{n-1}{c} p_d^{rc + r + c} , 
  \end{aligned}
\end{equation}

and 

\begin{equation}
  \begin{aligned}
    &\mathbb{E}\left[K_{\rm MSNN}\right] 
    \le \mathbb{E}\left[K_{\rm MSNN}^\prime\right] \\ 
    =& \mathbb{E} \left[ \sum_{\substack{\mathrm{MAR}(d) \subseteq [m]\backslash{i}, \\ \mathrm{MAC}(d) \subseteq [n]\backslash{j}}} \mathbf{1}\{D_{ab, a \in \mathrm{MAR}(d), b \in \mathrm{MAC}(d)} = D_{ib, b \in \mathrm{MAC}(d)}, D_{aj, a \in \mathrm{MAR}(d)} = d\} \right] \\
    =& \sum_{\substack{\mathrm{MAR}(d) \subseteq [m]\backslash{i}, \\ \mathrm{MAC}(d) \subseteq [n]\backslash{j}}} \mathbb{E} \left[ \mathbf{1}\{D_{ab, a \in \mathrm{MAR}(d), b \in \mathrm{MAC}(d)} = D_{ib, b \in \mathrm{MAC}(d)}, D_{aj, a \in \mathrm{MAR}(d)} = d\} \right] \\
    =& \sum_{\substack{\mathrm{MAR}(d) \subseteq [m]\backslash{i}, \\ \mathrm{MAC}(d) \subseteq [n]\backslash{j}}} p_d^{r} \cdot \mathbb{P}\left( D_{ab, a \in \mathrm{MAR}(d), b \in \mathrm{MAC}(d)} = D_{ib, b \in \mathrm{MAC}(d)} \right) \\
    =& \sum_{\substack{\mathrm{MAR}(d) \subseteq [m]\backslash{i}, \\ \mathrm{MAC}(d) \subseteq [n]\backslash{j}}} \left[ p_d^{r} \cdot \left( \sum_{d^\prime \in \mathcal{L}} p_{d^\prime}^{r + 1}\right )^{c} \right] = \binom{m-1}{r}\binom{n-1}{c} \gamma^c p_d^{r} p_{\max}^{(r+1)c} . 
  \end{aligned}
\end{equation}

Then we upper bound the number of overlapped pairs $\mathbb{E}\left[K_{(\cdot)}^p \right]$.  

We denote events $I_{\mathrm{SNN}}^{(k)} = \{D_{ab, a \in \mathrm{AR}^{(k)}(d), b \in \mathrm{AC}^{(k)}(d)} = D_{aj, a \in \mathrm{AR}^{(k)}(d)} = D_{ib, b \in \mathrm{AC}^{(k)}(d)} = d\}$, and $I_{\mathrm{MSNN}}^{(k)} = \{D_{ab, a \in \mathrm{MAR}^{(k)}(d), b \in \mathrm{MAC}^{(k)}(d)} = D_{ib, b \in \mathrm{MAC}^{(k)}(d)}, D_{aj, a \in \mathrm{MAR}^{(k)}(d)} = d\}$, given group $(k)$ and corresponding row/column sets $\mathrm{(M)AR}^{(k)}(d)$ and $\mathrm{(M)AC}^{(k)}(d)$. The number of overlapped rows $\left| \mathrm{(M)AR}^{(1)}(d) \cap \mathrm{(M)AR}^{(2)}(d) \right|$ can vary from $1$ to $r$, while the number of overlapped columns can vary from $0$ to $c$ (despite the case of the two matrices are the same). 


For notation simplicity, we denote $\beta(r^\prime, c^\prime) \coloneqq \binom{m-1-r}{r^\prime} \binom{r}{r^\prime} \binom{n-1-c}{c^\prime} \binom{c}{c^\prime}$. By taking expectation, 

\begin{equation}
  \begin{aligned}
    &\mathbb{E}\left[K_{\rm SNN}^\prime\right] - \mathbb{E}\left[K_{\rm SNN}\right] 
    \le \mathbb{E}\left[K_{\rm SNN}^p\right] \\
    =& \mathbb{E} \left[ \sum_{\substack{\mathrm{AR}^{(1,2)}(d) \subseteq [m]\backslash{i}, \\ \mathrm{AC}^{(1,2)}(d) \subseteq [n]\backslash{j}, \\ 
    \{\mathrm{AR}^{(1)}(d), \mathrm{AC}^{(1)}(d)\} \ne \{\mathrm{AR}^{(2)}(d), \mathrm{AC}^{(2)}(d)\}}} \mathbf{1}\left\{I_{\mathrm{SNN}}^{(1)}, I_{\mathrm{SNN}}^{(2)}\right\} \right] \\ 
    =& \sum_{\substack{\mathrm{AR}^{(1,2)}(d) \subseteq [m]\backslash{i}, \\ \mathrm{AC}^{(1,2)}(d) \subseteq [n]\backslash{j}, \\ 
    \{\mathrm{AR}^{(1)}(d), \mathrm{AC}^{(1)}(d)\} \ne \{\mathrm{AR}^{(2)}(d), \mathrm{AC}^{(2)}(d)\}}} \mathbb{E} \left[ \mathbf{1}\left\{I_{\mathrm{SNN}}^{(1)}, I_{\mathrm{SNN}}^{(2)}\right\} \right] \\
    =& \frac{1}{2} \sum_{\substack{0\le r^\prime \le r-1, \\ 0\le c^\prime \le c, \\ (r^\prime, c^\prime) \ne (0,0)}} \binom{m-1}{r} \binom{n-1}{c} \beta(r^\prime, c^\prime) \cdot p_d^{r+r^\prime} p_d^{c+c^\prime} p_d^{2rc - (r-r^\prime) (c-c^\prime)} , \\
  \end{aligned}
\end{equation}

Similarly, 

\begin{equation}
  \begin{aligned}
    &\mathbb{E}\left[K_{\rm MSNN}^\prime\right] - \mathbb{E}\left[K_{\rm MSNN}\right] 
    \le \mathbb{E}\left[K_{\rm MSNN}^p\right] \\
    =& \mathbb{E} \left[ \sum_{\substack{\mathrm{MAR}^{(1,2)}(d) \subseteq [m]\backslash{i}, \\ \mathrm{MAC}^{(1,2)}(d) \subseteq [n]\backslash{j}, \\ 
    \{\mathrm{MAR}^{(1)}(d), \mathrm{MAC}^{(1)}(d)\} \ne \{\mathrm{MAR}^{(2)}(d), \mathrm{MAC}^{(2)}(d)\}}} \mathbf{1}\left\{I_{\mathrm{MSNN}}^{(1)}, I_{\mathrm{MSNN}}^{(2)}\right\} \right] \\ 
    =& \sum_{\substack{\mathrm{MAR}^{(1,2)}(d) \subseteq [m]\backslash{i}, \\ \mathrm{MAC}^{(1,2)}(d) \subseteq [n]\backslash{j}, \\ 
    \{\mathrm{MAR}^{(1)}(d), \mathrm{MAC}^{(1)}(d)\} \ne \{\mathrm{MAR}^{(2)}(d), \mathrm{MAC}^{(2)}(d)\}}} \mathbb{E} \left[ \mathbf{1}\left\{I_{\mathrm{MSNN}}^{(1)}, I_{\mathrm{MSNN}}^{(2)}\right\} \right] \\
    =& \frac{1}{2} \sum_{\substack{0\le r^\prime \le r-1, \\ 0\le c^\prime \le c, \\ (r^\prime, c^\prime) \ne (0,0)}} \binom{m-1}{r} \binom{n-1}{c} \beta(r^\prime, c^\prime) \cdot p_d^{r+r^\prime} \left( \sum_{d^\prime \in \mathcal{L}} p_{d^\prime}^{r + 1} \right)^{2c^\prime} \left( \sum_{d^\prime \in \mathcal{L}} p_{d^\prime}^{r + r^\prime + 1} \right)^{c-c^\prime} . \\
  \end{aligned}
\end{equation}

From properties of combinatorial number, $\beta(r^\prime, c^\prime) \le \frac{(mr)^{r^\prime} (nc)^{c^\prime}}{(r^\prime)!^2 (c^\prime)!^2}$. 

From the property of the Modified Bessel Function of the First Kind, Order Zero, $I_0(x) \le \frac{1}{1 - x^2/4}$ for $0\le x < 2$. 

By Lemma \ref{lemma: summation of combinatorial numbers, MCAR} and condition (\ref{equation: sparse, relaxed, MCAR}), we have 

\begin{equation}
  \begin{aligned}
    & 2\mathbb{E}\left[K_{\rm SNN}^p\right] / \mathbb{E}\left[K_{\rm SNN}^\prime\right] \\
    =& \sum_{\substack{0\le r^\prime \le r-1, \\ 0\le c^\prime \le c, \\ (r^\prime, c^\prime) \ne (0,0)}} \beta(r^\prime, c^\prime) \cdot p_d^{r^\prime + c^\prime + r^\prime c + r c^\prime - r^\prime c^\prime} \\ 
    \le& \left(I_0 \left(2\sqrt{mr p_d^{c+1}}\right) - 1\right) + \left(I_0\left(2\sqrt{nc p_d^{r+1}}\right) - 1\right) \\+& \left(I_0\left(2\sqrt{mr p_d^{\alpha c + 1}}\right) - 1\right) \left(I_0\left(2\sqrt{nc p_d^{r + 1 + \alpha (1-r)}}\right) - 1\right) \\
    =& O\left( mr p_d^{c+1} + nc p_d^{r+1} + mr nc p_d^{(1 - \alpha)r + \alpha c + 2 + \alpha} \right), 
  \end{aligned}
\end{equation}

and 

\begin{equation}
  \begin{aligned}
    & 2\mathbb{E}\left[K_{\rm MSNN}^p\right] / \mathbb{E}\left[K_{\rm MSNN}^\prime\right] \\
    =& \sum_{\substack{0\le r^\prime \le r-1, \\ 0\le c^\prime \le c, \\ (r^\prime, c^\prime) \ne (0,0)}} \beta(r^\prime, c^\prime) \cdot p_d^{r^\prime} \left( \sum_{d^\prime \in \mathcal{L}} p_{d^\prime}^{r + 1} \right)^{2c^\prime - c} \left( \sum_{d^\prime \in \mathcal{L}} p_{d^\prime}^{r + r^\prime + 1} \right)^{c-c^\prime} \\ 
    \le& \sum_{\substack{0\le r^\prime \le r-1, \\ 0\le c^\prime \le c, \\ (r^\prime, c^\prime) \ne (0,0)}} \beta(r^\prime, c^\prime) \cdot p_d^{r^\prime} \left( \sum_{d^\prime \in \mathcal{L}} p_{d^\prime}^{r + 1} \right)^{c^\prime} p_{\max}^{r^\prime(c-c^\prime)} \\ 
    \le& \left(I_0\left(2\sqrt{mr p_d p_{\max}^{c}}\right) - 1\right) + \left(I_0\left(2\sqrt{nc \left( \sum_{d^\prime \in \mathcal{L}} p_{d^\prime}^{r + 1} \right)}\right) - 1\right) \\+& \left(I_0\left(2\sqrt{mr p_d p_{\max}^{\alpha c}}\right) - 1\right) \left(I_0\left(2\sqrt{nc \left( \sum_{d^\prime \in \mathcal{L}} p_{d^\prime}^{r + 1} \right) p_{\max}^{\alpha (1-r)}} \right) - 1\right) \\
    =& O\left( mr p_d p_{\max}^{c} + nc\gamma p_{\max}^{r + 1} + mr nc\gamma p_d p_{\max}^{\alpha c + (1 - \alpha)r + 1 + \alpha} \right). 
  \end{aligned}
\end{equation}

By combining the results above, under (\ref{equation: sparse, relaxed, MCAR}) we have 

\begin{equation}
  \begin{aligned}
    \frac{\mathbb{E}\left[K_{\rm SNN}(d)\right]}{\binom{m-1}{r}\binom{n-1}{c} p_d^{rc + r + c}} &\in \left[ \frac{1}{1 + O\left( mr p_d^{c+1} + nc p_d^{r+1} + mr nc p_d^{(1 - \alpha)r + \alpha c + 2 + \alpha} \right)} , 1 \right] , \\
    \frac{\mathbb{E}\left[K_{\rm MSNN}(d)\right]}{\binom{m-1}{r}\binom{n-1}{c} \gamma^c p_d^{r} p_{\max}^{(r+1)c}} &\in \left[ \frac{1}{1 + O\left( mr p_d p_{\max}^{c} + nc\gamma p_{\max}^{r + 1} + mr nc\gamma p_d p_{\max}^{\alpha c + (1 - \alpha)r + 1 + \alpha} \right)} , 1 \right] . 
  \end{aligned}
\end{equation}

Further by the sparsity condition in Theorem \ref{theorem: expectation of K, MCAR}, 

\begin{equation}
  \begin{aligned}
    O\left( mr p_d^{c+1} + nc p_d^{r+1} + mr nc p_d^{(1 - \alpha)r + \alpha c + 2 + \alpha} \right) &= o(1) , \\ 
    O\left( mr p_d p_{\max}^{c} + nc\gamma p_{\max}^{r + 1} + mr nc\gamma p_d p_{\max}^{\alpha c + (1 - \alpha)r + 1 + \alpha} \right) &= o(1) . 
  \end{aligned}
\end{equation}

This completes the proof. 

\end{proof}

Proof of Corollary \ref{corollary: data efficiency, comparison to SNN, MCAR}: 

\begin{proof}\label{proof: corollary: data efficiency, comparison to SNN, MCAR}

By Theorem \ref{theorem: expectation of K, MCAR} we have 

\begin{equation}
  \begin{aligned}
    \mathbb{E}\left[K_{\rm SNN}(d)\right] &= (1 - o(1)) \binom{m-1}{r}\binom{n-1}{c} p_d^{rc + r + c} , \\ 
    \mathbb{E}\left[K_{\rm MSNN}(d)\right] &= (1 - o(1)) \binom{m-1}{r}\binom{n-1}{c} \gamma^c p_d^{r} p_{\max}^{(r+1)c} . 
  \end{aligned}
\end{equation}

By taking division we completes the proof. 

\end{proof}

Proof of Corollary \ref{corollary: efficiency compared to the treatment level with the most data, MCAR}: 

\begin{proof}\label{proof: corollary: efficiency compared to the treatment level with the most data, MCAR}


From Theorem \ref{theorem: expectation of K, MCAR}, 

\begin{equation}
  \begin{aligned}
    \mathbb{E}\left[K_{\rm SNN}(d)\right] &= (1 - o(1)) \binom{m-1}{r}\binom{n-1}{c} p_d^{rc + r + c} , \\
    \mathbb{E}\left[K_{\rm MSNN}(d)\right] &= (1 - o(1)) \binom{m-1}{r}\binom{n-1}{c} \gamma^c p_d^{r} p_{\max}^{(r+1)c} , \\
    \mathbb{E}\left[K_{\rm MSNN}(d_{\max})\right] &\le \binom{m-1}{r}\binom{n-1}{c} \gamma^c p_{\max}^{rc + r + c} . 
  \end{aligned}
\end{equation}



By taking division, we completes the proof. 

\end{proof}


\end{document}